\documentclass{article}

\usepackage{amsmath, amsthm}
\usepackage{graphicx}
\usepackage{subcaption}

\newtheorem{lemma}{Lemma}
\newtheorem{theorem}{Theorem}
\newtheorem{proposition}{Proposition}

\usepackage[preprint]{neurips_2026}

\usepackage[utf8]{inputenc} 
\usepackage[T1]{fontenc}    
\usepackage{hyperref}       
\usepackage{url}            
\usepackage{booktabs}       
\usepackage{amsfonts}       
\usepackage{nicefrac}       
\usepackage{microtype}      
\usepackage{xcolor}         

\title{Reducing Diffusion Model Memorization with Higher Order Langevin Dynamics}

\author{%
  Benjamin Sterling \\ 
  Department of Applied Math \& Statistics\\
  Stony Brook University\\
  Stony Brook, NY 11790 \\
  \texttt{benjamin.sterling@stonybrook.edu} \\
  \And
  M\'onica F.~Bugallo \\
  Department of Electrical and Computer Engineering \\
  Stony Brook University \\
  Stony Brook, NY 11790 \\
  \texttt{monica.bugallo@stonybrook.edu} \\
  \AND
  Tom Tirer \\
  Faculty of Engineering \\
  Bar-Ilan University\\
  Ramat-Gan 5290002, Israel \\
  \texttt{tirer.tom@biu.ac.il} \\  
}

\begin{document}

\maketitle

\begin{abstract}

Diffusion/score-based models have emerged as powerful generative models, capable of generating high-quality samples that mimic the training data distribution. However, it has been observed that they are prone to reproducing training samples---known as ``memorization''---potentially violating copyright and privacy. In this paper, we study the effect of Higher-Order Langevin Dynamics (HOLD) on this phenomenon. HOLD diffusion processes introduce auxiliary variables; if the data variable is interpreted as ``position,'' then the auxiliary variables can be interpreted as ``velocity'' and ``acceleration,'' depending on the chosen order of the model. They were originally proposed based on the intuition that they regularize the trajectories of the data variable by implicitly imposing additional dynamical constraints. Our work provides, to our knowledge, the first theoretical characterization of the regularization effect of HOLD. Specifically, we show that in HOLD, the dynamics of the data variable are governed by a low-pass-filtered version of the learned score function, with smoothness increasing with the order of HOLD. We then analyze the optimal empirical score and the possibility of distribution collapse. Together, our results explain the mitigation of memorization as the model order increases. Finally, we present an empirical study on real-world data that supports our theory and highlights this distinct advantage of HOLD over standard diffusion in practice. 


\end{abstract}

\section{Introduction}

First introduced in \citep{diffusion2015} and later refined in \citep{song2019generative,diffusiondenoising, diffusioncts}, Denoising Diffusion Models (DDMs) have demonstrated remarkable success in generating high quality samples from intractable, high-dimensional data distributions. They are based on generating samples from a tractable latent distribution (typically, standard normal) and converting them to samples from the target distribution via iterative denoising with gradually decreasing noise level. 
Trained DDMs have also emerged as powerful generative priors for editing and restoring signals and images \citep{meng2022sdedit,kawar2022denoising,abu2022adir,garber2024image,garber2025zero}.
Recently, however, it has been observed that DDMs are prone to exactly reproducing training samples \citep{carlini2023extracting,somepalli2023diffusion}. This phenomenon, termed ``memorization'', can have severe negative implications, such as violating copyright and privacy.

Memorization of DDMs has recently become a topic of interest both experimentally and theoretically. \citep{yoon2023diffusion} suggested that diffusion models either ``generalize'' or ``memorize'' in a mutually exclusive manner, and that overfitting in this context is not ``benign'' (unlike certain phenomena in linear regression  \citep{benignoverfitting}). 
In fact, a key factor for diffusion memorization is that, if one considers an unconstrained denoiser/score function, the optimum of the empirical mean squared error loss used in training has an analytical form \citep{karras2022elucidating}, which necessarily outputs a sample from the training set as the noise level decreases \citep{yi2023generalization,zhang2024emergence}
(more details in Section~\ref{sec:Background}).
The work of \citep{kadkhodaie2024generalization} argues that diffusion models generalize because of inductive biases; their network architecture encourages shrinkage towards geometry-adaptive harmonic bases that are informed by the data distribution. \citep{zhang2024emergence} performs similar experiments, demonstrating a separation between memorization and generalization regimes depending on the dataset size. They argue that in the generalization regime, the learned distributions are largely independent of model architectures and are instead most heavily influenced by the training data. \citep{gu2025on} empirically studies the relationship between dataset size, the memorization ratio, and different state-of-the-art architectures. 
Other works on diffusion memorization \citep{bonnaire2025why, george2026denoising} include theoretical analyses based on the random features model.
In particular, \citep{bonnaire2025why} empirically and theoretically studies the level of memorization along the optimization.
Their results suggest that one can avoid or mitigate memorization using early stopping within a range of time that depends on the dataset size. However, in practice, especially in the presence of a biased validation dataset, it is hard to gauge when to stop. 
This study motivates the question of whether there are other ways to reduce memorization besides early stopping.

In parallel to the works analyzing memorization in diffusion models, there have been separate methods developed with the goals of shortening and smoothing the diffusion model's path from target and latent distributions through the introduction of higher-order auxiliary variables. Intuitively, if the data variable is interpreted as ``position,'' then the auxiliary variables can be interpreted as ``velocity'' and ``acceleration,'' depending on the chosen order of the model. 
\citep{dockhorn2021score} proposed critically-damped Langevin dynamics (CLD), which introduces a velocity variable, and studies the effect of critical damping on the diffusion dynamics. In this context, critical-damping refers to choosing parameters so that the diffusion forward process's matrix has a single geometric eigenvalue and thus increases the speed of convergence along diffusion time. Inspired by this work, higher-order Langevin dynamics (HOLD) \citep{hold} was introduced to simplify the diffusion process of CLD as an Ornstein–Uhlenbeck process superimposed with a skew symmetric variable coupling. Follow ups of this work have showed that it is possible to critically damp HOLD \citep{sterling,nold}, and that using HOLD helps to defend DDMs against membership inference attacks \citep{sterling2025defendingdiffusionmodelsmembership}.

In this paper, we study the effect of higher-order diffusion modeling on memorization.
First, we close a gap in the literature regarding the regularization effect of these models. These models were originally proposed based on the intuition that they regularize the trajectories of the data variable by implicitly imposing additional dynamical constraints; this regularization effect lacks theoretical justification. We provide, to our knowledge, the first rigorous characterization of the regularization effect of HOLD. Specifically, we utilize the Laplace transform to demonstrate the low-pass filtering effect of the model order on the learned score function. Then, we analyze the optimal empirical score function of HOLD models and present a result on the impossibility of distribution collapse. Altogether, these findings explain the mitigation of memorization as the model order increases.
Finally, we present an empirical study on real-world data that supports our theory. Namely, we measure the generation quality and memorization rate of models of different orders along training for two different datasets:
the CelebA dataset (extending the setup of \citep{bonnaire2025why}) and the CIFAR-10 dataset.
The experiments show that HOLD models offer similar Fr\'{e}chet Inception Distance (FID) results as plain (first-order) diffusion models, while memorization, measured as in previous works, is delayed and attenuated as the model order increases, aligned with our theory.

\section{Background}
\label{sec:Background}

\subsection{Score-based generative modeling and memorization by the optimal empirical score}

Previous works \citep{karras2022elucidating,yi2023generalization} have derived that the regular score matching, which uses an empirical training objective, has a closed form optimal score function 
that is based on a weighted average of the training data and even converges to being dependent solely on the single training sample closest to its input as $t \to 0$. 

Let us present this formally for the Ornstein–Uhlenbeck process, a standard DDM framework, which possesses the following stochastic differential equation (SDE):
\[d\mathbf{x}_t = -\xi \mathbf{x}_t dt + \sqrt{2 \xi L^{-1}}d\mathbf{w}_t,\]
where $L^{-1}$ and $\xi$ are algorithmic parameters and $\mathbf{w}_t$ represents the standard Wiener process.
In this forward process,
given a data sample $\mathbf{x}_0$, a sample $\mathbf{x}_t$ can be drawn according to $\mathbf{x}_t = \exp(-\xi t)\mathbf{x}_0 + \sigma_t \boldsymbol{\epsilon}$, where $\sigma_t^2 = L^{-1}(1 - \exp(-2\xi t))$ and $\boldsymbol{\epsilon} \sim \mathcal{N}(\mathbf{0}, \mathbf{I})$. 

The generation of data samples by  DDMs is based on the fact that there is an associated backward SDE with the same marginal distribution of $\mathbf{x}_t$ \citep{anderson1982reverse,diffusioncts}:
\begin{align}
    d\mathbf{x}_t &= -\xi \left( \mathbf{x}_t + 2L^{-1}\mathbf{s}(\mathbf{x}_t, t)\right) dt + \sqrt{2\xi L^{-1}}d\Bar{\mathbf{w}}_t,
    \label{eq:sde_backward}
\end{align}
where $\mathbf{s}(\cdot, t)=\nabla_{\mathbf{x}}\log p_t(\cdot)$ is the score function of $\mathbf{x}_t$ (with $p_t$ denoting the distribution of $\mathbf{x}_t$). The score function is modeled by $\mathbf{s}_\theta(\cdot, t)$ and learned during training.
For the standard training objective
\begin{align}
    \mathcal{L} &= \mathbb{E}_{t\sim \mathcal{U}(0, 1), \mathbf{x}_0, \boldsymbol{\epsilon} \sim \mathcal{N}(\mathbf{0},\mathbf{I})}|| \boldsymbol{\epsilon} + \sigma_t \mathbf{s}_\theta(\mathbf{x}_t, t) ||^2,
\end{align}
with the empirical distribution $p(\mathbf{x}_0)=\frac{1}{n_{\text{train}}}\sum\nolimits_{k=1}^{n_{\text{train}}} \delta(\mathbf{x}_0-\mathbf{x}_0^{(k)})$,
the optimal (unconstrained) score function is given by $\mathbf{s}_{\text{emp}}(\mathbf{x}, t) = \nabla_{\mathbf{x}}\log p^{\text{emp, OU}}_t(\mathbf{x})$, where
\begin{align*}
    p^{\text{emp, OU}}_t(\mathbf{x}) &= \frac{1}{n_{\text{train}}} \sum_{k=1}^{n_{\text{train}}} (2\pi  \sigma_t^2)^{-h/2} \exp \left(-\frac{1}{2\sigma_t^2}\bigg \| \mathbf{x} - \exp(-\xi t)\mathbf{x}^{(k)}_0 \bigg \|^2 \right).
\end{align*}
That is,
\begin{align*}
    s_{\text{emp}}(\mathbf{x}, t) &= - \frac{1}{\sigma_t^2} \left ( \mathbf{x} - \exp(-\xi t)\frac{\sum_{k=1}^{n_{\mathrm{train}}}\mathcal{N}(\mathbf{x} \mid \exp(-\xi t)\mathbf{x}^{(k)}_0,\sigma_t^2 \mathbf{I})
    \,\, \mathbf{x}^{(k)}_0}
    {\sum_{k=1}^{n_{\mathrm{train}}}\mathcal{N}(\mathbf{x} \mid \exp(-\xi t)\mathbf{x}^{(k)}_0,\sigma_t^2 \mathbf{I})} \right ).
\end{align*}
which follows the description above (i.e., weighted average of the training samples that converges to the nearest training sample as $t \to 0$). 

This conveys that diffusion models generalize due to explicit and implicit inductive biases, while approaching the unconstrained empirical optimum, e.g., due to a small number of training samples \citep{kadkhodaie2024generalization,zhang2024emergence} and/or prolonged training \citep{bonnaire2025why, george2026denoising}, increases memorization.

\subsection{Overview of HOLD}

We start this section by providing an overview of HOLD \citep{hold,nold} following the variable conventions used in \citep{dockhorn2021score}. Suppose a data point is expressed as a vector $\mathbf{x}_0 \in \mathbb{R}^h$. Let $\alpha, L^{-1}, \{\gamma_k\}_{1 \leq k \leq n-1},$ and $\xi$ denote algorithmic parameters, where $n$ denotes the order of the model. In HOLD, we define the initial diffusion variable $\mathbf{u}_0 = \operatorname{vec}(\mathbf{x}_0, \mathbf{v}^{(1)}_0, \mathbf{v}^{(2)}_0 \ldots \mathbf{v}^{(n-1)}_0)$, where $\mathbf{v}^{(1)}_0, \mathbf{v}^{(2)}_0 \ldots \mathbf{v}^{(n-1)}_0 \sim_{\text{iid}} \mathcal{N}(\mathbf{0}, \alpha L^{-1}\mathbf{I})$ are auxiliary variables. Take the following system:
\begin{align*}
    \mathbf{F} &= \sum_{i=1}^{n-1} \gamma_i \left( \mathbf{E}_{i, i+1} - \mathbf{E}_{i+1, i}\right) - \xi \mathbf{E}_{n,n}, \quad
    \mathbf{G} = \sqrt{2\xi L^{-1}} \mathbf{E}_{n,n},
\end{align*}
where $\mathbf{E}_{i,j} \in \mathbb{R}^{n \times n}$ is the matrix of all zeros with a one at index pair $(i,j)$. The forward process evolves according to the SDE:
\begin{align}
    \label{eq:fwdSDE}
    d\mathbf{u}_t &= \mathcal{F}\mathbf{u}_t dt + \mathcal{G}d\mathbf{w}_t,
\end{align}
where $\mathcal{F} = \mathbf{F} \otimes \mathbf{I}_h$, $\mathcal{G} = \mathbf{G} \otimes \mathbf{I}_h$ (with $\otimes$ denoting the Kronecker product), and $\mathbf{w}_t$ represents the standard Wiener process. For example, when $n=3$, the dynamics are governed by:
\begin{align*}
\begin{cases}
    d\mathbf{x}_t &= \gamma_1 \mathbf{v}^{(1)}_t dt,\\
    d\mathbf{v}^{(1)}_t &= \left(-\gamma_1 \mathbf{x}_t + \gamma_2 \mathbf{v}^{(2)}_t \right)dt, \\
    d\mathbf{v}^{(2)}_t &= \left(-\gamma_2 \mathbf{v}^{(1)}_t  - \xi \mathbf{v}^{(2)}_t\right)dt + \sqrt{2\xi L^{-1}} d\mathbf{w}_t.
\end{cases}
\end{align*}
Here, the dynamics of $\mathbf{x}_t$ are modeled by the ``velocity'' variable $\mathbf{v}^{(1)}_t$ and the ``acceleration'' variable $\mathbf{v}^{(2)}_t$.
In the notation of Equation \eqref{eq:fwdSDE}, the $\mathbf{F}$ and $\mathbf{G}$ matrices for $n=3$ become:
\begin{align*}
    \mathbf{F} &= \begin{pmatrix}
        0 & \gamma_1 & 0 \\
        -\gamma_1 & 0 & \gamma_2 \\
        0 & -\gamma_2 & -\xi
    \end{pmatrix}, \quad 
    \mathbf{G} = \begin{pmatrix}
        0 & 0 & 0 \\
        0 & 0 & 0 \\
        0 & 0 & \sqrt{2\xi L^{-1}}
    \end{pmatrix}.
\end{align*}

It can be shown that $\mathbf{u}_t$ that satisfies Equation \eqref{eq:fwdSDE} must be normally distributed (conditioned on $\mathbf{x}_0$), and expressions for the mean and covariance of $\mathbf{u}_t$ must satisfy the following differential equations, which can be derived from the Fokker-Planck equations \citep{sarkka2019applied},
\begin{align*}
    \frac{d \boldsymbol{\mu}_t}{dt} &= \mathcal{F}\boldsymbol{\mu}_t, \quad \frac{d \boldsymbol{\Sigma}_t}{dt} = \mathcal{F}\boldsymbol{\Sigma}_t + \left( \mathcal{F} \boldsymbol{\Sigma}_t\right)^T + \mathcal{G}\mathcal{G}^T.
\end{align*}
It is proven in \citep{nold} that $\mathbf{u}_t$ must also possess the following distribution:
\begin{align*}
    \mathbf{u}_t &\sim \mathcal{N}(\boldsymbol{\mu}_t, \boldsymbol{\Sigma}_t), \quad 
    \boldsymbol{\mu}_t = \exp(\mathcal{F}t)\mathbf{u}_0 ,\\
    \boldsymbol{\Sigma}_t &= L^{-1}\mathbf{I} + \exp(\mathcal{F}t) \left( \boldsymbol{\Sigma}_0 - L^{-1}\mathbf{I}\right) \exp(\mathcal{F}t)^T.
\end{align*}
In the forward diffusion process, 
samples are generated according to $\mathbf{u}_t = \exp(\mathcal{F}t)\mathbf{u}_0 + \mathbf{L}_t \boldsymbol{\epsilon}_{\text{full}}$, where $\boldsymbol{\epsilon}_1, \boldsymbol{\epsilon}_2 \ldots \boldsymbol{\epsilon}_n \sim \mathcal{N}(\mathbf{0}, \mathbf{I}_h), \boldsymbol{\epsilon}_{\text{full}} = \operatorname{vec}(\boldsymbol{\epsilon}_1, \boldsymbol{\epsilon}_2 \ldots \boldsymbol{\epsilon}_n),$ and $ \mathbf{L}_t = \operatorname{cholesky}(\boldsymbol{\Sigma}_t)$.

The computation of the term $\exp(\mathcal{F}t)$ is not obvious, and \citep{hold} uses Putzer's Lemma \citep{putzer} to compute it. However, when the parameters of $\mathcal{F}$ are chosen so that it posses a single geometric eigenvalue $s_*=\sqrt{2n-3}$, as in \citep{nold}, then it may be calculated by a finite Taylor Series, understanding that $\mathcal{F} - s_*\mathbf{I}$ is nilpotent:
\begin{align} \label{eq:expFt}
    \exp(\mathcal{F}t) = \exp(s_* t) \sum_{k=0}^{n-1}\frac{(\mathcal{F} - s_* \mathbf{I})^k t^k}{k!}.
\end{align}
The loss function under this forward SDE is the same score matching objective that \citep{dockhorn2021score, hold} use. Instead of modeling the score of the data variable $\mathbf{x}_t$ or the score of the full $\mathbf{u}_t$, it suffices to model the score of the last auxiliary variable $\mathbf{v}^{(n-1)}_t$ by $\mathbf{s}_\theta(\mathbf{u}_t,t)$, and learn it with the objective
\begin{align}
    \mathcal{L} &= \mathbb{E}_{t\sim \mathcal{U}(0, 1), \mathbf{u}_0, \boldsymbol{\epsilon}_{\text{full}}}\| \boldsymbol{\epsilon}_n + \mathbf{s}_\theta(\mathbf{u}_t, t)\left(\mathbf{L}_t[nh, nh]\right) \|^2.
    \label{eq:lossfcn}
\end{align}

The following ordinary differential equation (ODE) \citep{diffusioncts} shares the same approximate marginal probability densities as the backwards SDE and may be used for sample generation:
\begin{align}
    d\mathbf{u}_t &= \left(\mathcal{F}\mathbf{u}_t - \frac{1}{2} \mathcal{G}\mathcal{G}^T \mathbf{S}_\theta(\mathbf{u}_t, t)\right) dt,
    \label{eq:ode}
\end{align}
where $\mathbf{S}_\theta(\mathbf{u}_t, t) = \operatorname{vec}(\mathbf{0}_{(n-1)h}, \mathbf{s}_\theta(\mathbf{u}_t, t))$. It is further proven in \citep{nold} that the system is critically damped for $n \geq 2$, meaning $\mathbf{F}$ has a single geometric eigenvalue, if and only if
\begin{align*}
    \gamma_{n-i} &= \sqrt{2n-3}\sqrt{\frac{n^2-i^2}{4i^2-1}}, \quad \xi = n\sqrt{2n-3},
\end{align*}
assuming a scaling choice of $\gamma_1=1$. For further convenience in this paper, define $\Bar{\gamma} = \prod_{i=1}^{n-1} \gamma_i$.

\section{Theoretical Analysis}
\label{sec:theoreticalanalysis}

This section presents our theoretical analysis of the regularization introduced by the HOLD model and its ability to mitigate memorization.

\subsection{In HOLD the score is being low-pass filtered}

This subsection argues that HOLD introduces a low-pass filtering effect on the score during the generation procedure. While neither the reverse SDE nor reverse ODE have closed form solutions, we study the reverse ODE given by Equation \eqref{eq:ode} and analyze the relationship between the score function and generated samples from a signals and systems perspective \citep{oppenheim1997signals}. 

We start with Lemma \ref{lem:qspolynomial} below, which assists the proof of our key Theorem \ref{thm:residuethm}.
Essentially, it is based on taking the Laplace transform of both sides of Equation \eqref{eq:ode}, and identifying recursive patterns. 

\begin{lemma}\label{lem:qspolynomial}
Consider the ODE \eqref{eq:ode}.
If the Laplace transform of $\mathbf{x}_t$ is $\tilde{\mathbf{x}}(s)$, the Laplace transform of $\mathbf{s}_{\theta}(\mathbf{u}_t, t)$ is $\tilde{\mathbf{s}}(s)$ (accounting for $t$ from both arguments), and $P(s)$ is the characteristic polynomial of $\mathbf{F}$, then
\begin{align*}
    \tilde{\mathbf{x}}(s) &= - \frac{\Bar{\gamma} \xi L^{-1}\tilde{\mathbf{s}}(s)}{P(s)} + \tilde{\mathbf{x}}^{\mathrm{natural}}(s),
\end{align*}
where $\tilde{\mathbf{x}}^{\mathrm{natural}}(s)$ arises from nonzero $\mathbf{u}_0$ and is independent of the score.
Its full expression is available in the appendix.
\end{lemma}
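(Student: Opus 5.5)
We write the ODE \eqref{eq:ode} componentwise. Since $\mathcal{G}\mathcal{G}^T = 2\xi L^{-1}\mathbf{E}_{n,n}\otimes\mathbf{I}_h$ and $\mathbf{S}_\theta$ is nonzero only in its last block, the correction term $-\tfrac12\mathcal{G}\mathcal{G}^T\mathbf{S}_\theta$ acts only on the last block. It equals $-\xi L^{-1}\mathbf{s}_\theta(\mathbf{u}_t,t)$. Writing $\mathbf{v}^{(0)}_t=\mathbf{x}_t$ and $\gamma_0 = \gamma_n = 0$, the system becomes
\begin{align*}
\frac{d\mathbf{v}^{(i-1)}_t}{dt} &= -\gamma_{i-1}\mathbf{v}^{(i-2)}_t + \gamma_i \mathbf{v}^{(i)}_t, \qquad 1\le i\le n-1,\\
\frac{d\mathbf{v}^{(n-1)}_t}{dt} &= -\gamma_{n-1}\mathbf{v}^{(n-2)}_t - \xi\mathbf{v}^{(n-1)}_t - \xi L^{-1}\mathbf{s}_\theta(\mathbf{u}_t,t).
\end{align*}
We take Laplace transforms, using $\mathcal{L}\{\dot{\mathbf{y}}\}=s\tilde{\mathbf{y}}(s)-\mathbf{y}_0$. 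This gives the linear system
\[
(s\mathbf{I}-\mathbf{F})\otimes\mathbf{I}_h\,\tilde{\mathbf{u}}(s) = \mathbf{u}_0 - \xi L^{-1}\,\mathbf{e}_n\otimes\tilde{\mathbf{s}}(s),
\]
where $\tilde{\mathbf{s}}(s)$ is treated as a given function of $s$, even though it depends on $\mathbf{u}_t$. Because the system is linear with this forcing, superposition splits $\tilde{\mathbf{u}}$ into a part driven by $\mathbf{u}_0$, which defines $\tilde{\mathbf{x}}^{\mathrm{natural}}$, and a part driven by the score.

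For the score-driven part we only need the first block of $\tilde{\mathbf{u}}$. By Cramer's rule, this is the $(1,n)$ entry of $(s\mathbf{I}-\mathbf{F})^{-1}$, namely $\mathrm{cof}_{n,1}(s\mathbf{I}-\mathbf{F})/\det(s\mathbf{I}-\mathbf{F})$, multiplied by $-\xi L^{-1}\tilde{\mathbf{s}}(s)$. The denominator is $P(s)$.

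The key step is computing the cofactor. $s\mathbf{I}-\mathbf{F}$ is tridiagonal, with superdiagonal entries $-\gamma_i$ and subdiagonal entries $+\gamma_i$. Deleting row $n$ and column $1$ leaves an upper triangular matrix whose diagonal is the superdiagonal $(-\gamma_1,\dots,-\gamma_{n-1})$. The minor is therefore $(-1)^{n-1}\Bar{\gamma}$, and the sign $(-1)^{n+1}$ cancels it. The $(1,n)$ entry of the inverse is thus $\Bar{\gamma}/P(s)$, which yields the claimed term $-\Bar{\gamma}\xi L^{-1}\tilde{\mathbf{s}}(s)/P(s)$.

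In the spirit of the paper's "recursive patterns", we can alternatively argue by back-substitution. Starting from $\tilde{\mathbf{x}}$, solve each equation $i$ for $\tilde{\mathbf{v}}^{(i)}$ in terms of lower-index transforms. This gives $\tilde{\mathbf{v}}^{(k)} = (Q_k(s)/\prod_{i\le k}\gamma_i)\,\tilde{\mathbf{x}} + (\text{initial-condition terms})$, where $Q_k$ satisfies the continuant recursion $Q_{k}= sQ_{k-1}+\gamma_{k-1}^2Q_{k-2}$ with $Q_0=1$. Substituting into the last equation gives $\big((s+\xi)Q_{n-1}+\gamma_{n-1}^2Q_{n-2}\big)\tilde{\mathbf{x}}/\prod_{i<n}\gamma_i$. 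We then identify this leading polynomial with $P(s)$ by the standard tridiagonal determinant recursion. The $\mathbf{u}_0$ terms, collected and divided by $P(s)/\Bar{\gamma}$, form $\tilde{\mathbf{x}}^{\mathrm{natural}}$, which is explicitly independent of $\tilde{\mathbf{s}}$.

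The main obstacle is bookkeeping: keeping the signs of the skew-symmetric couplings consistent, so that the recursion has $+\gamma^2$ and the cofactor sign cancels exactly. We must also confirm that the recursion's end product equals $\det(s\mathbf{I}-\mathbf{F})$, including the $\xi$ term in the last diagonal entry. A second, minor point is that the result is an identity of transforms rather than a closed-form solution, since $\tilde{\mathbf{s}}$ depends on $\mathbf{u}_t$. We handle this by stating it at the level of Laplace transforms, which exist assuming $\mathbf{s}_\theta(\mathbf{u}_t,t)$ is of exponential order.
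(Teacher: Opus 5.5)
Your proposal is correct and follows the paper's route: Laplace-transform the ODE, use Cramer's rule to isolate the score-driven part of $\tilde{\mathbf{x}}(s)$, and evaluate the resulting triangular minor as $\pm\Bar{\gamma}$. Working with $s\mathbf{I}-\mathbf{F}$ from the start gives the monic $P(s)$ directly and makes the sign cancellation cleaner than the paper's later remark about flipping the sign of $\det(\mathbf{F}-s\mathbf{I})$; one harmless slip is that deleting row $n$ and column $1$ leaves a lower (not upper) triangular matrix, though its diagonal is still $(-\gamma_1,\dots,-\gamma_{n-1})$, so the minor $(-1)^{n-1}\Bar{\gamma}$ is unaffected.
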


See Appendix \ref{pf:qspolynomial} for the proof. Now that we have this result, we may proceed to the following theorem.

\begin{theorem} \label{thm:residuethm}
    Let $h_t^{(n)} = -\Bar{\gamma}n\sqrt{2n-3}L^{-1}t^{n-1}\exp(-t\sqrt{2n-3})$ for $n \geq 2$. For model order $n \geq 2$, the solution to \eqref{eq:ode} for the data variable $\mathbf{x}_t$ is:
    \[\mathbf{x}_t = h_t^{(n)} * \mathbf{s}_\theta(\mathbf{u}_t, t) + \mathbf{x}^{\mathrm{natural}}_t,\]
    where `$*$' denotes the convolution operation (in the time domain).
\end{theorem}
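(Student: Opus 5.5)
The plan is to start from Lemma~\ref{lem:qspolynomial}, which already isolates the score-driven part of $\tilde{\mathbf{x}}(s)$ as $-\Bar{\gamma}\xi L^{-1}\tilde{\mathbf{s}}(s)/P(s)$. What remains is to (i) identify $P(s)$ explicitly under critical damping, (ii) invert the rational transfer function $1/P(s)$ back to the time domain, and (iii) apply the convolution theorem for the Laplace transform. The natural term $\tilde{\mathbf{x}}^{\mathrm{natural}}(s)$ is carried along unchanged: its inverse transform is, by definition, $\mathbf{x}^{\mathrm{natural}}_t$, and it does not depend on the score.

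\textbf{Step (i).} Under the critical-damping choice of $\gamma_{n-i}$ and $\xi = n\sqrt{2n-3}$ from \citep{nold}, $\mathbf{F}$ has a single eigenvalue of algebraic multiplicity $n$. Because $\operatorname{tr}\mathbf{F} = -\xi = -n\sqrt{2n-3}$, that eigenvalue must be $-\sqrt{2n-3}$. This matches the nilpotent expansion \eqref{eq:expFt} once the sign convention there is read as $\exp(-\sqrt{2n-3}\,t)$, and it agrees with the decaying exponential appearing in $h^{(n)}_t$. Hence
\[
P(s) = \bigl(s+\sqrt{2n-3}\bigr)^n .
\]
I will state this via the trace argument, since the fact that all eigenvalues coincide is exactly the critical-damping result already cited. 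With $\xi$ substituted, the score-driven part becomes
\[
-\Bar{\gamma}\, n\sqrt{2n-3}\, L^{-1}\,\frac{\tilde{\mathbf{s}}(s)}{\bigl(s+\sqrt{2n-3}\bigr)^n} .
\]

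\textbf{Step (ii).} I will invert $1/(s+a)^n$ with $a=\sqrt{2n-3}$. This can be done by the residue theorem: the Bromwich integral has a single pole of order $n$ at $s=-a$, and the residue of $e^{st}/(s+a)^n$ there is $t^{n-1}e^{-at}/(n-1)!$. Equivalently, one can use the standard table entry obtained by repeatedly differentiating $1/(s+a)$ with respect to $s$. Since the poles lie in the left half-plane, the inverse is a genuine causal, integrable kernel, which justifies the low-pass interpretation.

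\textbf{Step (iii).} The Laplace convolution theorem, $\mathcal{L}\{f*g\} = \tilde f\,\tilde g$ with the one-sided convolution $\int_0^t f(t-\tau)g(\tau)\,d\tau$, then gives $\mathbf{x}_t = h^{(n)}_t * \mathbf{s}_\theta(\mathbf{u}_t,t) + \mathbf{x}^{\mathrm{natural}}_t$, applied componentwise in $\mathbb{R}^h$ thanks to the Kronecker structure.

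\textbf{Main obstacle: constants and conventions.} First, the residue calculation produces the factor $1/(n-1)!$, which does not appear in the stated $h^{(n)}_t$. I will track this carefully and either absorb it into the kernel or flag the discrepancy, since it affects only scaling and not the shape of the filter. Second, the time direction matters: the ODE \eqref{eq:ode} is integrated backward in $t$. I will fix the convention that the Laplace transform is taken in the integration variable of the generation process, exactly as Lemma~\ref{lem:qspolynomial} does, so that the transforms are well defined. The dependence of $\mathbf{s}_\theta$ on $\mathbf{u}_t$ causes no difficulty, because the score is treated as a given forcing signal in $t$; this is how the lemma's $\tilde{\mathbf{s}}(s)$ is defined.
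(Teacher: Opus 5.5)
Your route is the paper's route: Lemma~\ref{lem:qspolynomial}, then $P(s)=(s+\sqrt{2n-3})^n$ under critical damping, then inversion of $1/P(s)$ through the Bromwich integral and the residue at the single pole, then the convolution theorem. Two of your refinements improve on the paper's write-up. First, your trace argument fixes the sign of the eigenvalue. The paper writes $s_*=\sqrt{2n-3}$ in the background section, but its proof uses $\exp(s_*t)=\exp(-t\sqrt{2n-3})$ and its appendix takes $s_*=-1$ for $n=2$. Second, your suspicion about the factorial is correct.

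On the factorial, do not hedge. The paper's proof computes the residue as $\lim_{s\to s_*}\frac{d^{n-1}}{ds^{n-1}}\bigl[(s-s_*)^n H(s)e^{st}\bigr]$. That omits the $1/(n-1)!$ required for a pole of order $n$, so the stated $h^{(n)}_t$ is right only for $n=2$.

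You can check $n=3$ directly. With $\gamma_1=1$, $\gamma_2=2\sqrt2$ and $\xi=3\sqrt3$, one gets $P(s)=(s+3\sqrt3)(s^2+1)+8s=(s+\sqrt3)^3$. Its reciprocal inverts to $\tfrac{t^2}{2}e^{-\sqrt3 t}$, so the true kernel is half the stated one.

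There is nothing to absorb this into. The statement fixes $h^{(n)}_t$ explicitly, and $\mathbf{x}^{\mathrm{natural}}_t$ is score-independent, so it cannot soak up a score-dependent difference. What your argument proves, and what is true, is the theorem with $t^{n-1}/(n-1)!$ in place of $t^{n-1}$. The qualitative low-pass conclusion is unaffected.

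Also state the time convention precisely rather than as "the integration variable of the generation process." The decaying kernel comes from transforming in forward time $t$ from the initial value $\mathbf{u}_0$, as the lemma does. This is a valid identity for any solution trajectory, whichever direction one integrates numerically. Transforming in reversed time $\tau=T-t$ would replace $\mathbf{F}$ by $-\mathbf{F}$, put the poles at $+\sqrt{2n-3}$, and give a growing kernel.
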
 

See Appendix \ref{pf:residuethm} for the proof. The proof uses the Residue Theorem applied to the equation in Lemma \ref{lem:qspolynomial}. The variables $\tilde{\mathbf{x}}^{\mathrm{natural}}(s)$ and $\mathbf{x}^{\mathrm{natural}}_t$ are referred to as ``natural'' parameters because they both come from nonzero $\mathbf{u}_0$, as opposed to the ``forcing'' term, $-\frac{1}{2}\mathcal{G}\mathcal{G}^T \mathbf{S}_\theta(\mathbf{u}_t, t)$ in Equation \eqref{eq:ode}. It is also noteworthy that the score, $\mathbf{s}_\theta(\mathbf{u}_t, t)$ affects $\mathbf{x}_t$ only through convolution with $h_t^{(n)}$. 

Theorem \ref{thm:residuethm} is a novel result that is most useful for theoretical analysis and may also benefit the design of the forward process itself. It does not directly lead to an algorithm to solve for $\mathbf{x}_t$ because $\mathbf{s}_\theta$ is a function of $\mathbf{u}_t$, which is itself a function of $\mathbf{x}_t$, as well as the fact that $\mathbf{x}^{\mathrm{natural}}_t$ depends on $\mathbf{x}_0$ (which is unknown during the backward process).

Both results speak to the optimality of choosing critically damped parameters. It was proven in \citep{nold} that the critically damped parameters are optimal for the following design objective involving the forward matrix $\mathbf{F}$:
\begin{align}
    \min_{\gamma_2, \gamma_3 \ldots \gamma_{n-1}, \xi} \max \left( \operatorname{Re} \left(\operatorname{eig}(\mathbf{F})\right)\right).
    \label{eq:critdampedobjective}
\end{align}
With Lemma \ref{lem:qspolynomial} and Theorem \ref{thm:residuethm}, one may understand that not choosing critically damped parameters would result in the characteristic polynomial $P(s) = \prod_{i=1}^n (s - s_i)$, which leads to $h_t^{(n)} = \sum_{i=1}^n c_i \exp(s_i t)$ (the $c_i$ are the coefficients obtained from the residue theorem). Because the critically damped parameters minimize \eqref{eq:critdampedobjective}, not using the critically damped parameter choices necessitates the existence of at least one eigenvalue of $\mathbf{F}$, $s_k$ such that $s_k > s_*$. The consequence of such a mode $c_k \exp(s_k t)$ is slower convergence along diffusion time, or from a frequency perspective: a mode that allows more high frequencies from $\mathbf{s}_\theta$ pass into $\mathbf{x}_t$. The critically damped parameters therefore represent the filter with the sharpest frequency cutoff.

A simple derivation in Appendix \ref{pf:OUconv} yields that the solution of the backward ODE associated with the (first-order) Ornstein–Uhlenbeck process: $d\mathbf{x}_t = -\xi \mathbf{x}_t dt + \sqrt{2\xi L^{-1}}d\mathbf{w}_t$ 
may be expressed with $h_t^{\text{OU}} = -\xi L^{-1} \exp(-\xi t)$:
\[\mathbf{x}_t = \exp(-\xi t) \mathbf{x}_0 -\xi L^{-1}\int_0^t \exp(-\xi(t - \tau))\mathbf{s}_\theta(\mathbf{x}_\tau, \tau) d\tau = \exp(-\xi t) \mathbf{x}_0 + h_t^{\text{OU}} * \mathbf{s}_\theta(\mathbf{x}_t, t).\]
Contrasting this result with Theorem~\ref{thm:residuethm} sheds a new light on the inherent regularization of HOLD. It formally shows the smoothing effect of HOLD on the generated sample trajectories, where the
``smoothing'' 
is manifested by
$h_t^{(n)}$ acting on the score function as a low-pass filter with stronger attenuation of high frequencies than $h_t^{\text{OU}}$. 
This function, which constraints the generation trajectories, along with the theory developed in the following section, explains why HOLD reduces memorization.

We turn to visually demonstrate our findings on the filters that convolve with the score function.
Figure \ref{fig:lowpassfilters} 
plots the Fourier transforms of $h_t^{\text{OU}}$ and $h_t^{(n)}$ for $n=2,3,4$, which may be obtained from each Laplace transform by plugging in $s = i\omega$. An ideal low-pass filter takes the form of an indicator function: $H_{\text{ideal}}(\omega) = \begin{cases}
    1, \quad |\omega| \leq \omega_{\text{cutoff}} \\
    0, \quad \text{else}
\end{cases}$; this is achieved by selecting $h_t^{(n)}$ as a scaled $\operatorname{sinc}(t) = \frac{\sin(2 \pi t)}{2\pi t}$. However this cannot be perfectly implemented as a causal $h_t^{(n)}$, resulting in a complex valued Fourier transform, and it is hard to model such a function with a set of differential equations as is being done here. Therefore, the better low-pass filters in Figure \ref{fig:lowpassfilters} allow for a wider range of low frequencies to pass while more sharply attenuating higher frequencies. This sharper attenuation is achieved by the term $t^{n-1}$ in $h^{(n)}_t$, that is associated with a pole in Laplace space of multiplicity $n$. Clearly, the HOLD filters are stronger low-pass filters than the Ornstein–Uhlenbeck filter.

\begin{figure}
    \centering
    \includegraphics[width=0.6\linewidth]{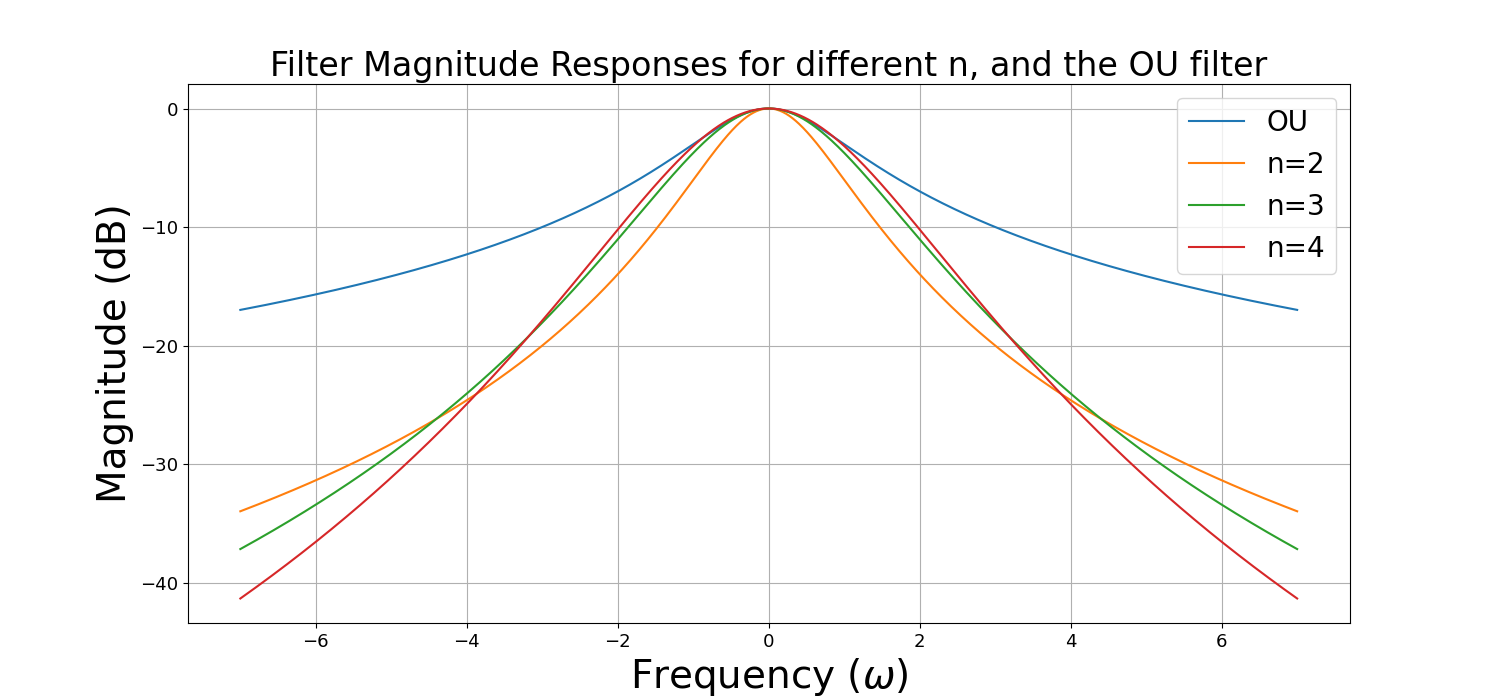}
    \caption{Magnitudes of the Fourier Transforms $|H(i\omega)|$ for different HOLD diffusion model orders $n$, and the Ornstein–Uhlenbeck filter with $\xi = 1$. One may observe that the HOLD filters are better at attenuating higher frequencies than OU, while still allowing a wide band of lower frequencies. 
    }
    \label{fig:lowpassfilters}
\end{figure}

\subsection{HOLD mitigates memorization}

We have shown above that HOLD regularizes the generation process.
In this subsection, we provide reasoning for its reduced memorization (shown empirically to be attenuated and delayed) by adding to our regularized dynamics result the fact that the HOLD optimal empirical score function is more complicated to learn than the optimal empirical score of a standard (first-order) DDM.
Then, under certain approximations, we formally show that HOLD prevents the learned distribution from collapsing to training samples, contrary to the standard Ornstein–Uhlenbeck diffusion process.

\subsubsection{The HOLD optimal empirical score function is hard to learn}

Here, we present the HOLD optimal empirical score function. 
That is, we derive the optimal unconstrained score function that minimizes the HOLD loss objective in Equation \eqref{eq:lossfcn} under empirical data distribution. The derivation makes use of a technical approximation that each training data point gets assigned a set of auxiliary variables once at the algorithm's initialization. A justification for this approximation is provided in subsection \ref{sec:roleinitaux}.

\begin{proposition} \label{prop:empdist}
For the empirical distribution $p(\mathbf{u}_0)=\frac{1}{n_{\text{train}}}\sum\nolimits_{k=1}^{n_{\text{train}}} \delta(\mathbf{u}_0-\mathbf{u}_0^{(k)})$, the distribution of $\mathbf{u}_t$, denoted by $p^{\mathrm{emp}}_t$, obeys
\begin{align*}
    p^{\mathrm{emp}}_t(\mathbf{u}) &= \frac{1}{n_{\mathrm{train}}} \sum_{k=1}^{n_{\mathrm{train}}} (2\pi  \det \boldsymbol{\Sigma}_t)^{-h/2} \exp \left(-\frac{1}{2}\bigg|\bigg|\boldsymbol{\Sigma}_t^{-1/2}\left( \mathbf{u} - \exp(\mathcal{F}t)\mathbf{u}^{(k)}_0\right)\bigg|\bigg|^2 \right) \\
    \nabla_{\mathbf{u}}\log p^{\mathrm{emp}}_t(\mathbf{u}) &= - \boldsymbol{\Sigma}^{-1}_t \mathbf{u} +  \boldsymbol{\Sigma}^{-1}_t \exp(\mathcal{F}t) \frac{\sum_{k=1}^{n_{\mathrm{train}}}\mathcal{N}(\mathbf{u} \mid \exp(\mathcal{F}t)\mathbf{u}^{(k)}_0,\boldsymbol{\Sigma}_t)
    \,\, \mathbf{u}^{(k)}_0}
    {\sum_{k=1}^{n_{\mathrm{train}}}\mathcal{N}(\mathbf{u} \mid \exp(\mathcal{F}t)\mathbf{u}^{(k)}_0,\boldsymbol{\Sigma}_t)}.
\end{align*}
The unconstrained score that minimizes \eqref{eq:lossfcn} is given by 
$\mathbf{s}_{\mathrm{emp}}(\mathbf{u}_t, t) = \left[\nabla_{\mathbf{u}_t} \log p^{\mathrm{emp}}_t(\mathbf{u}_t) \right]_{n(h-1):nh}$.
\end{proposition}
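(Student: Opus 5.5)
The plan has three steps: compute the marginal density, differentiate it, and then identify the minimizer of \eqref{eq:lossfcn} as a conditional expectation.

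\textbf{Marginal density.} Under the approximation that each training point carries fixed auxiliary variables, the initial distribution is the discrete measure $p(\mathbf{u}_0)=\frac{1}{n_{\text{train}}}\sum_k \delta(\mathbf{u}_0-\mathbf{u}_0^{(k)})$. The forward SDE \eqref{eq:fwdSDE} is linear with additive noise, so conditioned on $\mathbf{u}_0$ it is Gaussian: $\mathbf{u}_t\mid\mathbf{u}_0\sim\mathcal{N}(\exp(\mathcal{F}t)\mathbf{u}_0,\boldsymbol{\Sigma}_t)$, with mean and covariance as recalled from \citep{nold}. Because each training point is a point mass, the auxiliary variables are deterministic, so $\boldsymbol{\Sigma}_0=\mathbf{0}$ in the covariance formula. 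Marginalizing over $\mathbf{u}_0$ gives a uniform mixture of $n_{\text{train}}$ Gaussians. These share the covariance $\boldsymbol{\Sigma}_t$ and have means $\exp(\mathcal{F}t)\mathbf{u}_0^{(k)}$, which is exactly the stated form of $p^{\mathrm{emp}}_t$.

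\textbf{Score of the mixture.} Let $w_k(\mathbf{u})$ be the normalized Gaussian responsibilities. Differentiating $\log\sum_k\mathcal{N}(\mathbf{u}\mid\mathbf{m}_k,\boldsymbol{\Sigma}_t)$ gives
\[
\sum_k w_k(\mathbf{u})\bigl(-\boldsymbol{\Sigma}_t^{-1}(\mathbf{u}-\mathbf{m}_k)\bigr).
\]
The weights sum to one, and the linear map $\boldsymbol{\Sigma}_t^{-1}\exp(\mathcal{F}t)$ can be pulled out of the weighted sum. This yields the displayed gradient formula.

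\textbf{Minimizer of \eqref{eq:lossfcn}.} I would rewrite the loss as denoising score matching on the last block. Write $\mathbf{u}_t=\boldsymbol{\mu}_t+\mathbf{L}_t\boldsymbol{\epsilon}_{\text{full}}$ with $\mathbf{L}_t$ lower triangular. Then $\nabla_{\mathbf{u}}\log p(\mathbf{u}_t\mid\mathbf{u}_0)=-\boldsymbol{\Sigma}_t^{-1}\mathbf{L}_t\boldsymbol{\epsilon}_{\text{full}}=-\mathbf{L}_t^{-T}\boldsymbol{\epsilon}_{\text{full}}$. Since $\mathbf{L}_t^{-T}$ is upper triangular, its last block row involves only $\boldsymbol{\epsilon}_n$, with diagonal entry $1/\mathbf{L}_t[nh,nh]$. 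Hence the last block of the conditional score is $-\boldsymbol{\epsilon}_n/\mathbf{L}_t[nh,nh]$, and \eqref{eq:lossfcn} equals
\[
\mathbb{E}\,\mathbf{L}_t[nh,nh]^2\bigl\|\mathbf{s}_\theta(\mathbf{u}_t,t)-[\nabla\log p(\mathbf{u}_t\mid\mathbf{u}_0)]_{\text{last}}\bigr\|^2.
\]
In the $h$-dimensional case the matrix is $\mathbf{L}_t=\mathbf{L}^{\mathbf{F}}_t\otimes\mathbf{I}_h$, so the relevant scalar is the $(n,n)$ entry of the small Cholesky factor, but the computation is identical.

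For an unconstrained function and each fixed $t$ with positive weight, this weighted least-squares problem is minimized pointwise in $\mathbf{u}_t$ by the conditional expectation $\mathbb{E}\bigl[[\nabla\log p(\mathbf{u}_t\mid\mathbf{u}_0)]_{\text{last}}\mid\mathbf{u}_t\bigr]$. By the standard Vincent identity (Bayes' rule, with differentiation under the finite sum), this equals the last block of $\nabla_{\mathbf{u}}\log p_t^{\mathrm{emp}}(\mathbf{u}_t)$. The result is then read off from the gradient formula above.

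The main obstacle is the triangular-structure step: the last component of $\mathbf{L}_t^{-T}\boldsymbol{\epsilon}_{\text{full}}$ must depend only on $\boldsymbol{\epsilon}_n$, scaled by the correct diagonal entry. This requires $\mathbf{L}_t$ to be invertible, so I would restrict to $t>0$, where $\boldsymbol{\Sigma}_t$ is positive definite under the hypoellipticity of the coupled system. I would also record that the index range $[n(h-1):nh]$ should be read as the last $h$ coordinates.
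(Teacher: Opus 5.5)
Your proposal is correct and follows essentially the paper's route. Pointwise minimization in $\mathbf{u}_t$ gives the posterior-weighted average of $-\boldsymbol{\epsilon}_n^{(k)}/\mathbf{L}_t[nh,nh]$, which is identified with the last block of the conditional scores and hence of $\nabla_{\mathbf{u}}\log p^{\mathrm{emp}}_t$. Your version is more complete than the paper's: it derives the Gaussian mixture and its score explicitly with $\boldsymbol{\Sigma}_0=\mathbf{0}$, which is consistent with the paper's later use of $L^{-1}(\mathbf{I}-\exp(\mathcal{F}t)\exp(\mathcal{F}t)^T)$. It also justifies, via the triangular Kronecker structure of $\mathbf{L}_t$, the identity $[\nabla_{\mathbf{u}}\log p(\mathbf{u}_t\mid\mathbf{u}_0)]_{\text{last}}=-\boldsymbol{\epsilon}_n/\mathbf{L}_t[nh,nh]$ that the paper merely asserts. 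The only thing to soften is ``exactly the stated form,'' since the normalizing constant $(2\pi\det\boldsymbol{\Sigma}_t)^{-h/2}$ in the statement is not the correct one for an $nh$-dimensional Gaussian; it is common to all mixture components, however, and so irrelevant to the score and the minimizer.
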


See Appendix \ref{pf:empdist} for the proof. It proceeds in a very similar manner to the proof for the standard diffusion model, but in a multivariate non-isotropic setting. 

This proposition sheds some light on the hypothesized regularization of the training procedure. 
Specifically, note that the empirical distribution's score is a function of each $\mathbf{u}^{(k)}_0$, which contains initial auxiliary variables ($\mathbf{v}_0^{(1),(k)}, \ldots , \mathbf{v}_0^{(n-1),(k)}$) that are completely independent of the training data $\mathbf{x}_0^{(k)}$. HOLD thereby reduces the ability of the score network to fully memorize the true empirical distribution of the training data itself. It does so by forcing the score function to memorize the joint distribution containing the auxiliary variables instead.

\subsubsection{HOLD prevents distribution collapse}

Here, under certain simplifying assumptions, we formally show another distinction between HOLD and the standard first-order diffusion process.
Specifically, we approximate the Mahalanobis distance between the $k$th training sample and the empirical diffusion distribution as time approaches zero. It is shown below that this distance approaches zero for the Ornstein–Uhlenbeck diffusion process as $t \to 0$, but approaches a finite limit for HOLD $n=2$, and goes to infinity for HOLD $n=3$.

\begin{proposition} \label{prop:mahalanobis}
    Consider the $k$th training data sample $\mathbf{x}^{(k)}_t$, that including auxiliary variables becomes $\mathbf{u}^{(k)}_0$. Supposing the training data samples are far enough apart, one may approximate the empirical distributions of the HOLD process for $n=\{2,3\}$ and the Ornstein–Uhlenbeck process as follows:
    \begin{align*}
    p^{\mathrm{emp, HOLD}}_t &\approx \mathcal{N}(\boldsymbol{\mu}^{\mathrm{HOLD}}_t, \boldsymbol{\Sigma}^{\mathrm{HOLD}}_t) := \mathcal{N}(\exp(\mathcal{F} t)\mathbf{u}^{(k)}_0, L^{-1}(\mathbf{I} - \exp(\mathcal{F} t) \exp(\mathcal{F}t)^T )), \\
    p^{\mathrm{emp, OU}}_t &\approx \mathcal{N}(\boldsymbol{\mu}^{\mathrm{OU}}_t, \boldsymbol{\Sigma}^{\mathrm{OU}}_t) := \mathcal{N}(\exp(-\xi t)\mathbf{x}^{(k)}_0, L^{-1}(1 - \exp(-2\xi t))\mathbf{I}).
\end{align*}
Then, the following Mahalanobis distance limits apply:
\begin{align*}
    \lim_{t \to 0^+} D_M(\mathbf{x}^{(k)}_0, p^{\mathrm{emp, OU}}_t) = 0, \quad \lim_{t \to 0^+} D_M(\mathbf{u}^{(k)}_0, p^{\mathrm{emp, HOLD}}_t) \gg 0.
\end{align*}
\end{proposition}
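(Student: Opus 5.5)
We compare the two distances by direct small-$t$ asymptotics. The Mahalanobis distance is $D_M^2(\mathbf{y}, \mathcal{N}(\boldsymbol{\mu}_t,\boldsymbol{\Sigma}_t)) = (\mathbf{y}-\boldsymbol{\mu}_t)^T\boldsymbol{\Sigma}_t^{-1}(\mathbf{y}-\boldsymbol{\mu}_t)$. The whole question is how fast the mean displacement $\mathbf{y}-\boldsymbol{\mu}_t$ vanishes relative to the covariance $\boldsymbol{\Sigma}_t$ as $t\to0^+$.

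\textbf{Ornstein--Uhlenbeck case.} This is a scalar computation. Since $\boldsymbol{\Sigma}^{\mathrm{OU}}_t$ is isotropic,
\[
D_M^2 = \frac{(1-e^{-\xi t})^2\|\mathbf{x}^{(k)}_0\|^2}{L^{-1}(1-e^{-2\xi t})} = \frac{\xi^2t^2+O(t^3)}{2\xi L^{-1}t+O(t^2)}\,\|\mathbf{x}^{(k)}_0\|^2 = O(t)\to 0 .
\]
The mechanism is that the displacement is $O(t)$ in every coordinate, while the variance is of order $t$ in every coordinate.

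\textbf{HOLD case.} The displacement is
\[
\mathbf{u}^{(k)}_0-\exp(\mathcal{F}t)\mathbf{u}^{(k)}_0=-\mathcal{F}\mathbf{u}^{(k)}_0\,t+O(t^2),
\]
computed exactly from the finite expansion \eqref{eq:expFt}. Its $\mathbf{x}$-block is $-\gamma_1\mathbf{v}^{(1),(k)}_0t+O(t^2)$. Because the auxiliary variables are drawn from $\mathcal{N}(\mathbf{0},\alpha L^{-1}\mathbf{I})$, this leading coefficient is almost surely nonzero.

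For the covariance, expand
\[
\mathbf{I}-\exp(\mathcal{F}t)\exp(\mathcal{F}t)^T=-(\mathcal{F}+\mathcal{F}^T)t-\tfrac12\big(\mathcal{F}^2+2\mathcal{F}\mathcal{F}^T+(\mathcal{F}^T)^2\big)t^2-\cdots .
\]
The skew-symmetric couplings cancel in $\mathcal{F}+\mathcal{F}^T=-2\xi\,\mathbf{E}_{n,n}\otimes\mathbf{I}_h$. So at first order only the last auxiliary block receives variance; noise reaches the $\mathbf{x}$-block only through the chain of $n-1$ couplings $\gamma_i$.

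The key step, and the main obstacle, is to make this precise. I will identify $\boldsymbol{\Sigma}_t$ to leading order with the controllability Gramian of the integrator chain. Concretely, I will show that
\[
\boldsymbol{\Sigma}_t=\mathbf{D}_t\,(\mathbf{K}+O(t))\,\mathbf{D}_t,\qquad \mathbf{D}_t=\operatorname{diag}(t^{\,n-i+1/2})_{i=1}^n\otimes\mathbf{I}_h,
\]
where $\mathbf{K}$ is a fixed positive-definite (Hilbert-type) matrix depending on $\xi,L^{-1},\bar\gamma$. I would verify this first for $n=2,3$ by expanding the entries of $\exp(\mathcal{F}t)\exp(\mathcal{F}t)^T$ up to order $t^{2n-1}$ using \eqref{eq:expFt}. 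For $n=2$ this gives $\Sigma_{xx}\sim t^3$, $\Sigma_{xv}\sim t^2$, $\Sigma_{vv}\sim t$.

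Given this factorization,
\[
D_M^2\ \ge\ \lambda_{\min}(\mathbf{K}+O(t))^{-1}\,\big\|\mathbf{D}_t^{-1}(\mathbf{u}^{(k)}_0-\boldsymbol{\mu}_t)\big\|^2 .
\]
The $\mathbf{x}$-component of $\mathbf{D}_t^{-1}(\mathbf{u}^{(k)}_0-\boldsymbol{\mu}_t)$ behaves like $\gamma_1\|\mathbf{v}^{(1),(k)}_0\|\,t^{\,3/2-n}$. This is already nonvanishing, indeed nondecreasing, for $n\ge2$, and it grows like $t^{-3/2}$ for $n=3$. Hence the limit is bounded away from zero, i.e. $\gg0$.

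Two things need care to turn this into a clean statement. First, cross terms in $\mathbf{K}^{-1}$ must not cancel the leading contribution; the eigenvalue bound above handles this. Second, for $n=2$ the exact limiting constant depends on which components of $\mathbf{u}^{(k)}_0$ are included. If it comes out finite, as the authors state, I would check it by computing the $2\times2$ block inverse explicitly. In either case the conclusion needed is only that the HOLD distance is strictly positive in the limit while the OU distance tends to zero.
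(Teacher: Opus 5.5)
Your argument is correct and takes a genuinely different route from the paper. The paper never evaluates the quadratic form on $\mathbf{u}^{(k)}_0$ itself. It writes $D_M^2=L\,(\mathbf{u}^{(k)}_0)^T\mathbf{M}_t\mathbf{u}^{(k)}_0$ with $\mathbf{M}_t=(\mathbf{I}-\exp(\mathbf{F}t))^T(\mathbf{I}-\exp(\mathbf{F}t)\exp(\mathbf{F}t)^T)^{-1}(\mathbf{I}-\exp(\mathbf{F}t))$. It then computes $\lim_{t\to0^+}\det\mathbf{M}_t$ by explicit expansion ($3/4$ for $n=2$; $+\infty$ for $n=3$, via sympy) and infers positivity from that determinant.

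You instead use the small-time Gramian scaling. It holds for every $n\ge2$ because $\mathbf{I}-\exp(\mathbf{F}t)\exp(\mathbf{F}t)^T=2\xi\int_0^t\exp(\mathbf{F}s)\mathbf{E}_{n,n}\exp(\mathbf{F}s)^T ds$. This makes $\mathbf{K}$ a diagonal congruence of the Hilbert-type Gram matrix $[1/(2n-i-j+1)]_{ij}$. For $n=2$ this gives $\mathbf{K}=L^{-1}\begin{pmatrix}4/3&2\\2&4\end{pmatrix}$, consistent with the orders $t^3,t^2,t$ in the paper's own expansion.

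Your route controls the form along the actual vector, uniformly in $n$ and with a rate: $D_M^2\ge c\,\|\mathbf{v}^{(1),(k)}_0\|^2t^{3-2n}$. A determinant only controls the product of the eigenvalues of $\mathbf{M}_t$, and that difference matters here. For $n=2$, the eigenvalues of $\mathbf{M}_t$ behave like $3/t$ in the $\mathbf{v}$ direction and $t/4$ in the $\mathbf{x}$ direction. So $D_M\to0$ when $\mathbf{v}^{(1),(k)}_0=\mathbf{0}$. Likewise, a divergent determinant for $n=3$ says nothing about a fixed direction. Your observation that $\mathbf{v}^{(1),(k)}_0\ne\mathbf{0}$ almost surely is exactly what the conclusion needs. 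What the paper's route buys is a single sample-independent scalar that is easy to compute for larger $n$; this is what Figure 2 plots.

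Two small fixes. First, since $\mathbf{y}^T\mathbf{A}^{-1}\mathbf{y}\ge\lambda_{\max}(\mathbf{A})^{-1}\|\mathbf{y}\|^2$, your lower bound should use $\lambda_{\max}(\mathbf{K}+O(t))$. With $\lambda_{\min}$ it is an upper bound. This is harmless, because $\mathbf{K}$ is fixed and positive definite.

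Second, you need not hedge about $n=2$. Your exponent $t^{3/2-n}=t^{-1/2}$ already gives divergence. In fact, with the critically damped parameters, $D_M^2=3L\|\mathbf{v}^{(1),(k)}_0\|^2/t+O(1)$. The finite value $3/4$ in the paper is $\lim\det\mathbf{M}_t$, not the limit of $D_M$.
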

See Appendix \ref{pf:mahalanobis} for the proof. This proposition is proven for the Ornstein–Uhlenbeck process and HOLD model orders $n=2,3$ through direct computation of limits. It suggests that for time $t \to 0$ the Ornstein–Uhlenbeck distribution collapses onto the data, but the HOLD distribution does not. Furthermore, Figure \ref{fig:mahalanobisdists}, numerically computes the determinants of the inverse covariance matrices used in Appendix \ref{pf:mahalanobis}. This figure suggests that the Mahalanobis distances diverge for $n$ larger than $2$. Therefore, HOLD is able to avoid distribution collapse, unlike the Ornstein–Uhlenbeck process.

\begin{figure}
    \centering
    \includegraphics[width=0.6\linewidth]{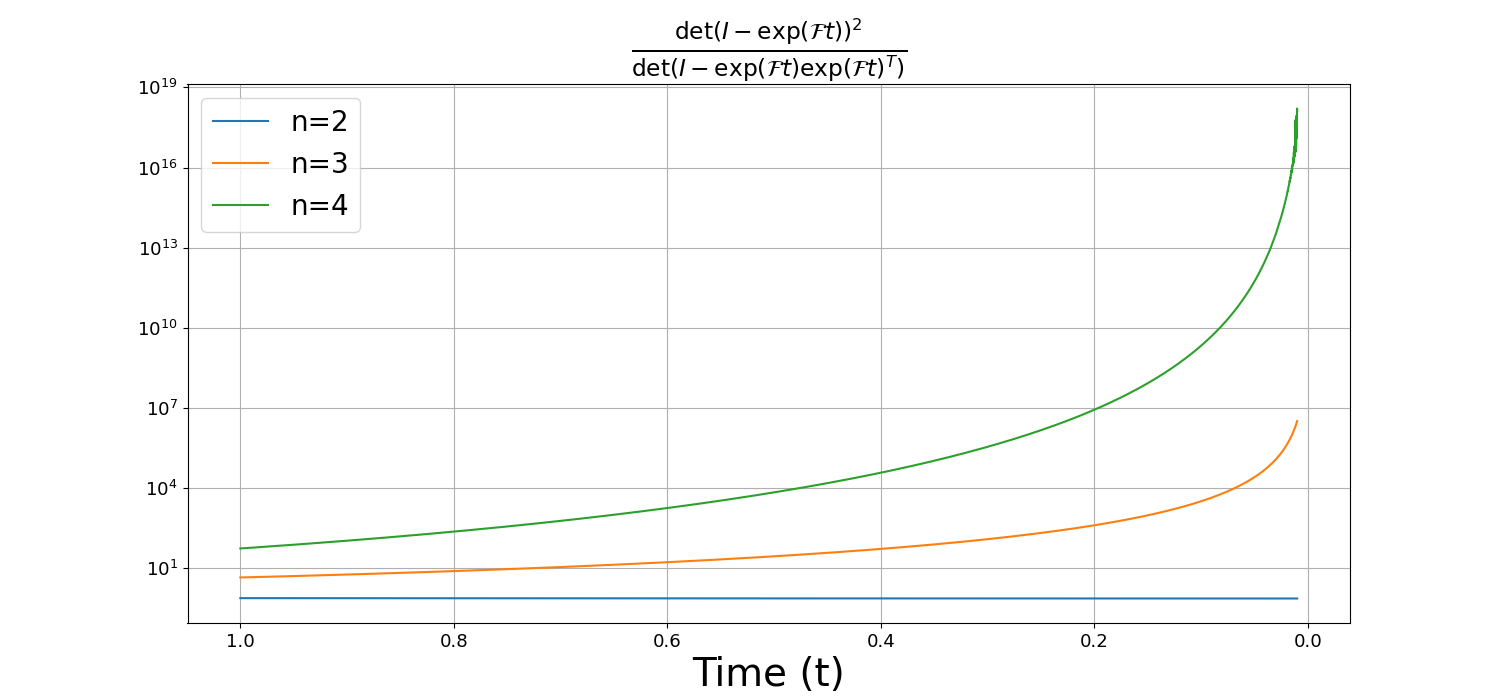}
    \caption{Determinant of the inverse covariance matrix, assuming $L^{-1}=1$, used to calculate the Mahalanobis distance in Proposition \ref{prop:mahalanobis}. As one raises model order, the determinant only increases, thereby improving resistance to distribution collapse.}
    \label{fig:mahalanobisdists}
\end{figure}

\section{Experiments}
\label{sec:experiments}

In order to validate the theory that HOLD regularizes the training and sample generation processes, we perform experiments in a setup similar to \citep{bonnaire2025why}. We compare memorization rate and generation quality along the training of HOLD diffusion models and first-order diffusion models based on the widely used Variance Preserving Stochastic Differential Equation (VPSDE) framework. Successful regularization should force memorization lower while preserving similar image qualities, measured by Fr\'{e}chet Inception Distance (FID) \citep{heusel2017gans}. Memorization measurement is performed on the sample level, as in \citep{bonnaire2025why}. The distances between every generated sample and every training sample are computed. For every generated sample, the gap ratio, the ratio of distances between the first closest and second closest training samples is computed. If it falls below a certain threshold, then that generated sample is declared to be memorized. Each reported memorization percentage is the ratio of memorized generated samples over the total number of generated samples in that batch.

Formally, we use a batch size $B=1024$ and a gap ratio threshold $\tau=0.333$ (as in \citep{bonnaire2025why,yoon2023diffusion,gu2025on}), and define $d^{(j)}_k$ as the $\ell_2$ distance between the $k$th generated sample and the $j$th closest image in the training set. The memorization metric, termed Fmem, is calculated as follows:
\begin{align*}
    M_k &= \begin{cases}
        1, \quad d^{(1)}_k/d^{(2)}_k < \tau \\
        0, \quad \text{else}
    \end{cases}, \quad 
    \text{Fmem} = \frac{1}{B}\sum_{k=1}^B M_k.
\end{align*}
Confidence intervals are obtained recognizing that Fmem is a sample proportion. More experimental details appear in Appendix~\ref{app:miscExpDetails}.

\subsection{CelebA Dataset}

The main experimental setup in our paper examines the memorization behavior of HOLD throughout training on the CelebA dataset \citep{liu2015deep} for different dataset sizes. To facilitate extensive examination, before training, the images are shrunk from the center to size $32 \times 32$, and are converted from RBG to grayscale. These are the exact same preprocessing steps that \citep{bonnaire2025why} uses; we also use the same UNet architecture and data splits. The only architectural deviation was the additional inputs required for the auxiliary variables.

Figure \ref{fig:celebafidfmems} presents the FID and Fmem results during training for four different dataset sizes $n_{\text{train}}$.
From inspection of this figure, HOLD orders $2$ and $3$ respectively improve memorization resistance for comparable FID levels at each selected $n_{\text{train}}$. The generated data samples in Figure \ref{fig:celebamemorization-three-models} confirm these findings (See Appendix~\ref{app:visual_results} for more visual results). Eight generated samples are drawn from each model and compared against their nearest neighbors in the training set. A majority the images generated by the VPSDE diffusion model either closely or identically resemble certain training data images. The HOLD $n=2$ case significantly reduces the memorization percentage and for the most part produces images that look quite different from their nearest training neighbors. Finally HOLD $n=3$ further improves HOLD $n=2$ in the same fashion. Each diffusion model produces nearly equal FIDs at this number of training iterations, and visual image quality is similar overall.

\begin{figure}
    \centering
    \includegraphics[width=1.0\linewidth]{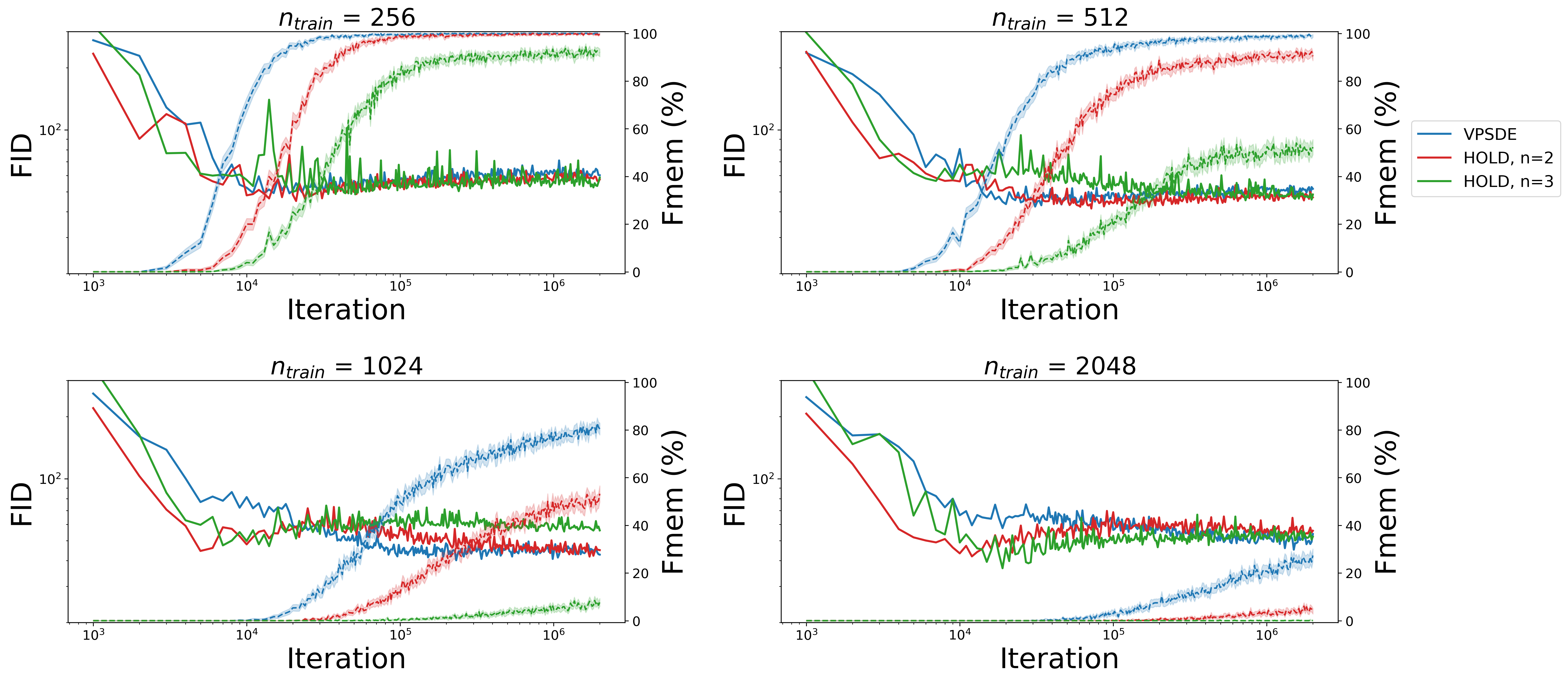}
    \caption{CelebA FIDs and fraction memorized (Fmem) percentages by the number of training samples. Memorization is presented with 95\% confidence intervals. Using HOLD and increasing the model order helps to mitigate memorization for roughly the same FID levels.}
    \label{fig:celebafidfmems}
\end{figure}

\begin{figure}[t]
    \centering

    \begin{subfigure}[b]{\textwidth}
        \centering
        \includegraphics[width=0.6\textwidth]{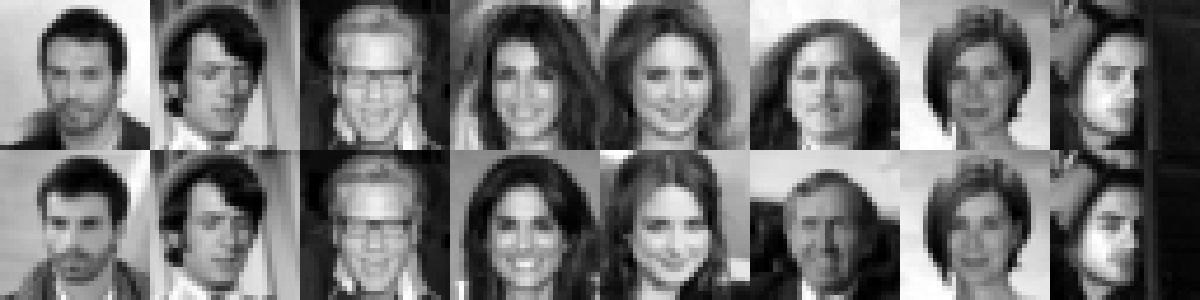}
        \caption{VPSDE. $\text{Fmem: }27.339\%, \text{FID: }53.823$}
        \label{fig:memorization-hold}
    \end{subfigure}

    \begin{subfigure}[b]{\textwidth}
        \centering
        \includegraphics[width=0.6\textwidth]{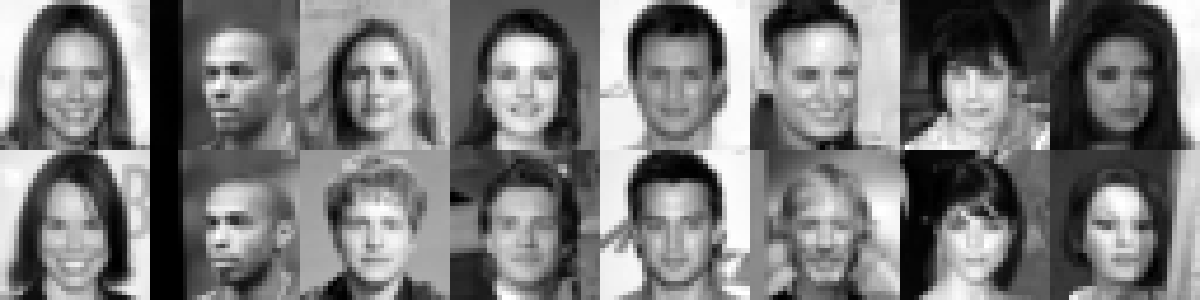}
        \caption{HOLD $n=2$. $\text{Fmem: } 4.029\%,\text{FID: }55.819$}
        \label{fig:memorization-vpsde}
    \end{subfigure}

    \begin{subfigure}[b]{\textwidth}
        \centering
        \includegraphics[width=0.6\textwidth]{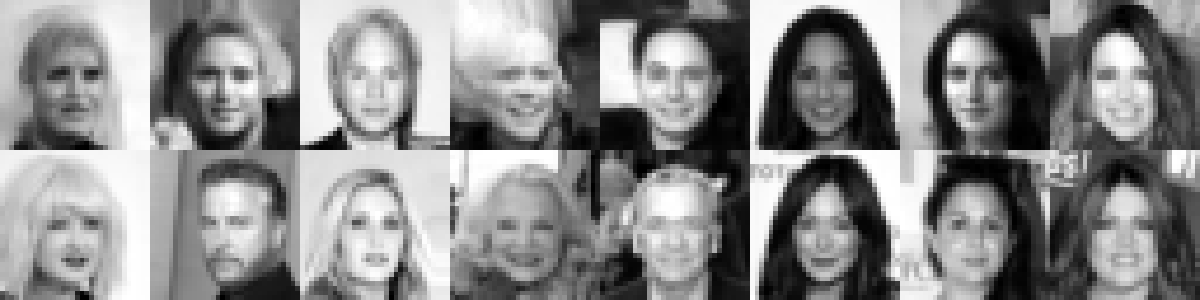}
        \caption{HOLD $n=3$. $\text{Fmem: } 0.101\%,\text{FID: }52.223$}
        \label{fig:memorization-third}
    \end{subfigure}

    \caption{Nearest training neighbors for different models at $2\mathrm{e}6$ training iterations with $2048$ training images on the CelebA dataset. Each first row contains the generated images, and each second row contains the corresponding nearest neighbors. The VPSDE generated samples are heavily memorized, while the HOLD generated images, for the most part, are not nearly as memorized.}
    \label{fig:celebamemorization-three-models}
\end{figure}

\subsection{CIFAR-10 Dataset}

We perform similar experiments with the same UNet architecture on the CIFAR-10 dataset \citep{krizhevsky2009learning} (converted to grayscale), and similar conclusions hold here. 
The key difference between CIFAR-10 and CelebA is that this dataset features ten different image categories, resulting in more dataset diversity and ultimately fewer images to learn from per category. However, this difference does not change the capabilities of HOLD that delay and attenuate memorization. Figure \ref{fig:cifar10fidfmems} demonstrates this, with memorization decreasing for higher orders, and plateauing to roughly the same FID. These curves do exhibit higher memorization variances and higher FIDs, but this is due to the fact that there are less images per category. 

\begin{figure}
    \centering
    \includegraphics[width=0.8\linewidth]{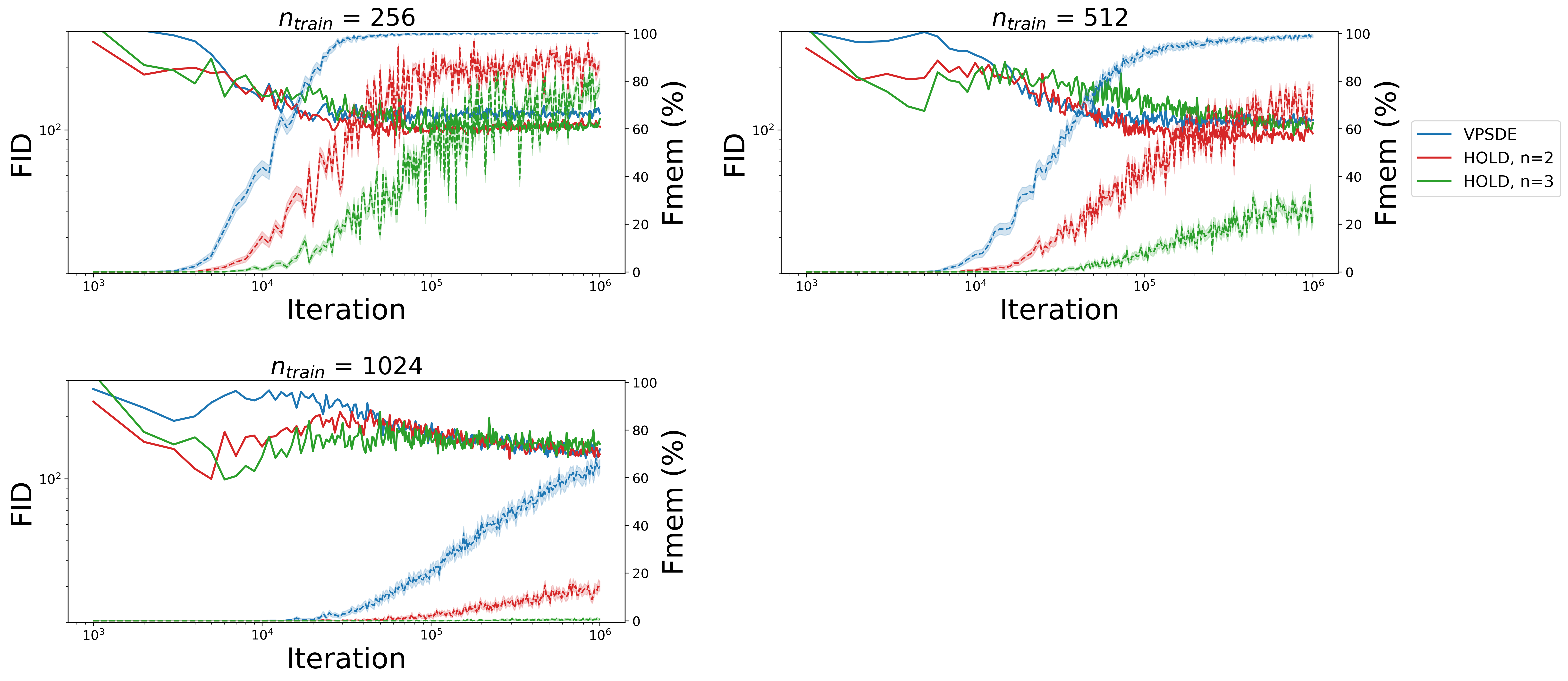}
    \caption{CIFAR-10 FIDs and fraction memorized (Fmem) percentages by the number of training samples. Memorization presented with 95\% confidence intervals. Using HOLD and increasing the model order helps to mitigate memorization for similar FID levels.}
    \label{fig:cifar10fidfmems}
    \vspace{-5mm}
\end{figure}

\section{Conclusion}
\label{sec:conclusion}
While diffusion models offer unparalleled generated image quality compared to earlier methods, they are also susceptible to memorization. Previous works have suggested early stopping as a solution, but when this is inconvenient to do, there are not many other techniques in the literature that are designed to prevent model memorization. The theoretical and empirical results presented in this work suggest that HOLD models may be used to perform statistical shrinkage on both the training and sampling procedures, alleviating this problem that is otherwise difficult to measure. One future point of work would be further exploration of diffusion models that implicitly act as low-pass filters on their score function. This work did not explore optimizing the forward SDE to obtain optimal filtering properties. It would be worthwhile to explore alternate models inspired by classical signal processing, and whether they could potentially further reduce memorization without compromising the generation quality. 

Some limitations of HOLD are that it requires a slightly larger memory footprint. However, the number of necessary parameters do not increase by a factor of $n$ (the model order); in our experiments, only extra inputs were added to the network to include the auxiliary variables. It is also worth noting that $n=4$ still achieves reasonable image qualities, but the qualities start to taper off for model orders higher than this \citep{nold}.
Regarding the broader impact of our work, reducing memorization in diffusion models is largely beneficial: when training data is used with consent, it helps protect copyright and user privacy by making such data harder to extract from the trained model. On the other hand, making diffusion models less prone to memorization may also make it harder to detect when users’ data has been used for training without their consent.

\section*{Acknowledgment}
TT was supported by the Israel Science Foundation grant No.~1940/23 and MOST grant No.~0007091.

\bibliographystyle{plainnat}
\bibliography{refs}

\clearpage

\appendix

\section{Technical Proofs}

This section of the appendix will be used to rigorously prove the theoretical claims of paper.

\subsection{Proof of Proposition \ref{prop:empdist}}
\label{pf:empdist}
\begin{proof}
    Start with training samples $\{ \mathbf{u}_0^{(k)}\}, 1 \leq k \leq n_{\text{train}}$. The data distribution, including initial auxiliary variables, may be expressed as the following linear combinations of Dirac delta distributions:
\begin{align*}
    p_{\text{data}}(\mathbf{u}) &= \frac{1}{n_{\text{train}}}\sum_{k=1}^{n_{\text{train}}} \delta(\mathbf{u} - \mathbf{u}_0^{(k)}) \\
    \mathcal{L}(s_\theta) &= \mathbb{E}_{t\sim \mathcal{U}(0, 1), \mathbf{u}_0, \boldsymbol{\epsilon}_{\text{full}}}|| \boldsymbol{\epsilon}_n + \mathbf{s}_\theta(\mathbf{u}_t, t)\left(\mathbf{L}_t[nh, nh]\right) ||^2 \\
    &= \int\mathbb{E}_{t\sim \mathcal{U}(0, 1), \mathbf{u}_0}|| \boldsymbol{\epsilon}_n + \mathbf{s}_\theta(\mathbf{u}_t, t)\left(\mathbf{L}_t[nh, nh]\right) ||^2 p(\boldsymbol{\epsilon}_{\text{full}}) d\boldsymbol{\epsilon}_{\text{full}} = \int \mathcal{L}(s_\theta, \mathbf{u}_t) d\boldsymbol{\epsilon}_{\text{full}}.
\end{align*}
Minimizing $\mathcal{L}(s_\theta)$ may be achieved by minimizing $\mathcal{L}(s_\theta, \mathbf{u}_t)$ instead. We do this by minimizing this loss over all $\mathbf{s}_\theta$ as follows. We define $\boldsymbol{\epsilon}_{\text{full}}^{(k)} = \mathbf{L}_t^{-1}\left(\mathbf{u}_t - \exp(\mathcal{F}t)\mathbf{u}_0^{(k)}\right)$ and $\boldsymbol{\epsilon}_n^{(k)} = \left[\boldsymbol{\epsilon}^{(k)}_{\text{full}} \right]_{n(h-1):nh}$; that is the final n entries of the vector $\boldsymbol{\epsilon}^{(k}_{\text{full}}$. Both come from the $k$th sample.
\begin{align*}
    \mathcal{L}(\mathbf{s}_\theta, \mathbf{u}_t) &= \frac{1}{n_{\text{train}}} \sum_{k=1}^{n_{\text{train}}} || \boldsymbol{\epsilon}^{(k)}_n + \mathbf{s}_\theta(\mathbf{u}_t, t) \left( \mathbf{L}_t[nh, nh] \right)||^2 p(\boldsymbol{\epsilon}_{\text{full}}^{(k)}).
\end{align*}
Take the gradient with respect to $\mathbf{s}_\theta$ and set it to zero:
\begin{align*}
    &\sum_{k=1}^{n_{\text{train}}} \left(\boldsymbol{\epsilon}^{(k)}_n + \mathbf{s}_\theta(\mathbf{u}_t, t) \left( \mathbf{L}_t[nh, nh] \right)\right) p(\boldsymbol{\epsilon}_{\text{full}}^{(k)}) = 0 \\
    &\mathbf{s}_\theta(\mathbf{u}_t, t) = -\frac{\sum_{k=1}^{n_{\text{train}}} p(\boldsymbol{\epsilon}^{(k)}_{\text{full}}) \boldsymbol{\epsilon}_n^{(k)} / \mathbf{L}_t[nh, nh]}{\sum_{k=1}^{n_{\text{train}}} p(\boldsymbol{\epsilon}^{(k)}_{\text{full}})} = \frac{\sum_{k=1}^{n_{\text{train}}} p(\boldsymbol{\epsilon}^{(k)}_{\text{full}}) \nabla_{\mathbf{v}^{(n-1)}_t}\log p(\mathbf{u}_t \mid \mathbf{u}^{(k)}_0)}{\sum_{k=1}^{n_{\text{train}}} p(\boldsymbol{\epsilon}^{(k)}_{\text{full}})}\\
    &= \left[ \frac{\sum_{k=1}^{n_{\text{train}}} p(\boldsymbol{\epsilon}^{(k)}_{\text{full}}) \nabla_{\mathbf{u}_t}\log p(\mathbf{u}_t \mid \mathbf{u}^{(k)}_0) }{\sum_{k=1}^{n_{\text{train}}} p(\boldsymbol{\epsilon}^{(k)}_{\text{full}})} \right]_{n(h-1):nh}.
\end{align*}
If one calculates the score function corresponding to $p_t^{\text{emp}}$, then it follows that $\mathbf{s}_\theta(\mathbf{u}_t, t) = \left[\nabla_{\mathbf{u}_t} \log p^{\text{emp}}_t(\mathbf{u}_t) \right]_{n(h-1):nh}$.
\end{proof}

\subsection{Proof of Lemma \ref{lem:qspolynomial}}
\label{pf:qspolynomial}
\begin{proof}
Note that $\tilde{\mathbf{u}}(s) = \operatorname{vec}(\tilde{\mathbf{x}}(s), \tilde{\mathbf{v}}^{(1)}(s), \ldots \tilde{\mathbf{v}}^{(n-1)}(s)), \Tilde{\mathbf{S}}(s) = \mathbf{S}_\theta(\mathbf{u}_t, t)$.
\begin{align*}
    \frac{d\mathbf{u}_t}{dt} &= \mathcal{F}\mathbf{u}_t - \xi L^{-1} \mathbf{E}_{n,n} \mathbf{S}_\theta(\mathbf{u}_t, t) \\
    s\tilde{\mathbf{u}}(s) - \mathbf{u}_0 &= \mathcal{F}\tilde{\mathbf{u}}(s) - \xi L^{-1} \mathbf{E}_{n,n} \tilde{\mathbf{S}}(s) \\
    ((\mathbf{F} - s\mathbf{I}) \otimes \mathbf{I}_h )\tilde{\mathbf{u}}(s) &= \xi L^{-1} \mathbf{E}_{n,n} \tilde{\mathbf{S}}(s) - \mathbf{u}_0.
\end{align*}
Now, use Cramer's Rule to solve for $\tilde{\mathbf{x}}(s)$. Take $\mathbf{0}, \mathbf{1} \in \mathbb{R}^h$ as the vector of all zeros and ones respectively. $\mathrm{blockdet}$ takes determinants across each coordinate of the following vectors and returns them in a single vector. $P(s) = \det(\mathbf{F}-s\mathbf{I})$ is the characteristic polynomial of $\mathbf{F}$. The second line follows from the multilinearity of the determinant.
\begin{align*}
    \tilde{\mathbf{x}}(s) &= \frac{1}{P(s)} \mathrm{blockdet} \begin{pmatrix}
        -\mathbf{x}_0 & \gamma_1 \mathbf{1} & \ldots & \mathbf{0}\\
        -\mathbf{v}^{(1)}_0 & -s \mathbf{1} & \ldots & \mathbf{0} \\
        \ldots & \ldots & \ldots & \ldots \\
        -\mathbf{v}^{(n-2)}_0 & \ldots & -s\mathbf{1} & \gamma_{n-1}\mathbf{1} \\
        \xi L^{-1}\tilde{\mathbf{s}}(s) - \mathbf{v}^{(n-1)}_0 & \ldots & -\gamma_{n-1}\mathbf{1} & (-s-\xi)\mathbf{1}
    \end{pmatrix}\\
    &= \frac{\xi L^{-1}}{P(s)} \mathrm{blockdet} \begin{pmatrix}
        \mathbf{0} & \gamma_1 \mathbf{1} & \ldots & \mathbf{0}\\
        \mathbf{0} & -s\mathbf{1} & \ldots & \mathbf{0} \\
        \ldots & \ldots & \ldots & \ldots \\
        \mathbf{0} & \ldots & -s\mathbf{1} & \gamma_{n-1}\mathbf{1} \\
        \tilde{\mathbf{s}}(s) & \ldots & -\gamma_{n-1}\mathbf{1} & (-s-\xi)\mathbf{1}
    \end{pmatrix} \\
    &+ \frac{1}{P(s)} \mathrm{blockdet} \begin{pmatrix}
        -\mathbf{x}_0 & \gamma_1 \mathbf{1} & \ldots & \mathbf{0}\\
        -\mathbf{v}^{(1)}_0 & -s\mathbf{1} & \ldots &\mathbf{0} \\
        \ldots & \ldots & \ldots & \ldots \\
        -\mathbf{v}^{(n-2)}_0 & \ldots & -s\mathbf{1} & \gamma_{n-1}\mathbf{1} \\
        -\mathbf{v}^{(n-1)}_0 & \ldots & -\gamma_{n-1}\mathbf{1} & (-s-\xi)\mathbf{1}
    \end{pmatrix}\\
    &= \frac{(-1)^{n-1}\xi L^{-1} \tilde{\mathbf{s}}(s)}{P(s)} \det \begin{pmatrix}
        \gamma_1 & 0 &\ldots & 0 \\
        -s & \gamma_2 & \ldots & 0 \\
        \ldots & \ldots & \ldots & \ldots \\
        \ldots & 0 & -s & \gamma_{n-1}
    \end{pmatrix} + \tilde{\mathbf{x}}^{\text{natural}}(s)\\
    \tilde{\mathbf{x}}(s) &= - \frac{\Bar{\gamma} \xi L^{-1}\tilde{\mathbf{s}}(s)}{P(s)} + \tilde{\mathbf{x}}^{\text{natural}}(s).
\end{align*}

where $\tilde{\mathbf{x}}^{\mathrm{natural}}(s)$ is given by:
\[\tilde{\mathbf{x}}^{\mathrm{natural}}(s) = \frac{1}{P(s)} \mathrm{blockdet} \begin{pmatrix}
        -\mathbf{x}_0 & \gamma_1 \mathbf{1} & \ldots &  \mathbf{0}\\
        -\mathbf{v}^{(1)}_0 & -s\mathbf{1} & \ldots & \mathbf{0} \\
        \ldots & \ldots & \ldots & \ldots \\
        -\mathbf{v}^{(n-2)}_0 & \ldots & -s\mathbf{1} & \gamma_{n-1}\mathbf{1} \\
        -\mathbf{v}^{(n-1)}_0 & \ldots & -\gamma_{n-1}\mathbf{1} & (-s-\xi)\mathbf{1}
    \end{pmatrix}.\]

The last determinant is simply the product of gammas because the matrice's main diagonal contains the gammas, and one minor row filled with $-s$. Note that the negative characteristic polynomial is also a valid characteristic polynomial, but we take the polynomial with a leading positive coefficient.
\end{proof}

\subsection{Proof of Theorem \ref{thm:residuethm}}
\label{pf:residuethm}
\begin{proof}
According to Lemma \ref{lem:qspolynomial},
\begin{align*}
    \tilde{\mathbf{x}}(s) &= - \frac{\Bar{\gamma} \xi L^{-1}\tilde{\mathbf{s}}(s)}{(s - s_*)^n} + \tilde{\mathbf{x}}^{\text{natural}}(s).
\end{align*}

This equation assumes that we use the critically-damped parameter selection. Further define $H(s) = -\frac{\Bar{\gamma}\xi L^{-1}}{(s-s_*)^n}$. To solve for $h_t$, we must take the inverse Laplace transform:
\begin{align*}
    h_t &= \frac{1}{2 \pi i}\lim_{T \to \infty} \int_{\gamma - iT}^{\gamma + iT} H(s) \exp(st) ds
\end{align*}
for some $\gamma > 0$. Now we utilize the Residue Theorem:
\begin{align*}
    h_t &= \operatorname{Res} (H(s) \exp(st))\mid_{s = s_*}\\
    &= \lim_{s \to s_*} \frac{d^{n-1}}{ds^{n-1}} \left(H(s) \exp(st)(s - s_*)^n\right) \\
    &= -\Bar{\gamma}\xi L^{-1} \lim_{s \to s_*} \frac{d^{n-1}}{ds^{n-1}} \left(\exp(st)\right) \\
    &= -\Bar{\gamma}\xi L^{-1} t^{n-1} \exp(s_*t) = -\Bar{\gamma}n\sqrt{2n-3}L^{-1}t^{n-1}\exp(-t\sqrt{2n-3}).
\end{align*}
Finally, the Convolution Theorem applied to $\tilde{\mathbf{x}}(s) = H(s)\tilde{\mathbf{s}}(s) + \tilde{\mathbf{x}}^{\text{natural}}(s)$ proves the theorem.
\end{proof}

\subsection{Proof of Proposition \ref{prop:mahalanobis}}
\label{pf:mahalanobis}

\begin{proof}
    \begin{align*}
    p^{\text{emp, HOLD}}_t &\approx \mathcal{N}(\boldsymbol{\mu}^{\text{OU}}_t, \boldsymbol{\Sigma}^{\text{HOLD}}_t) = \mathcal{N}(\exp(\mathcal{F} t)\mathbf{u}^{(k)}_0, L^{-1}(\mathbf{I} - \exp(\mathcal{F} t) \exp(\mathcal{F}t)^T )), \\
    p^{\text{emp, OU}}_t &\approx \mathcal{N}(\boldsymbol{\mu}^{\text{OU}}_t, \boldsymbol{\Sigma}^{\text{OU}}_t) = \mathcal{N}(\exp(-\xi t)\mathbf{x}^{(k)}_0, L^{-1}(1 - \exp(-2\xi t))\mathbf{I}).
\end{align*}
One may formally calculate that the Mahalanobis distance for the Ornstein–Uhlenbeck diffusion process is zero. Use L'H\^{o}pital's rule to solve the final limit.
\begin{align*}
    D_{M}(\mathbf{x}^{(k)}_0, p^{\text{emp, OU}}_t)^2 &= (\mathbf{x}^{(k)}_0 - \exp(-\xi t)\mathbf{x}^{(k)}_0)^T \frac{\mathbf{I}}{L^{-1}(1 - \exp(-2\xi t))}(\mathbf{x}^{(k)}_0 - \exp(-\xi t)\mathbf{x}^{(k)}_0) \\
    &= \frac{(1 - \exp(-\xi t))^2}{L^{-1}(1 - \exp(-2\xi t))}||\mathbf{x}^{(k)}_0||^2, \\
    \lim_{t \to 0^+} D_{M}(\mathbf{x}^{(k)}_0, p^{\text{emp, OU}}_t)^2 &= \lim_{t \to 0^+}\frac{(1 - \exp(-\xi t))^2}{L^{-1}(1 - \exp(-2\xi t))}||\mathbf{x}^{(k)}_0||^2 = 0.
\end{align*}

However, this limit is different for HOLD:
\begin{align*}
    D_{M}(\mathbf{u}^{(k)}_0, p^{\text{emp, HOLD}}_t)^2 &= \frac{1}{L^{-1}}(\mathbf{u}^{(k)}_0 - \exp(\mathcal{F}t)\mathbf{u}^{(k)}_0)^T \left(\mathbf{I} - \exp(\mathcal{F}t)\exp(\mathcal{F}t)^T\right)^{-1} (\mathbf{u}^{(k)}_0 - \exp(\mathcal{F} t)\mathbf{u}^{(k)}_0) \\
    &= \frac{1}{L^{-1}}(\mathbf{u}^{(k)}_0)^T(\mathbf{I} - \exp(\mathcal{F}t)^T)\left(\mathbf{I} - \exp(\mathcal{F}t)\exp(\mathcal{F}t)^T\right)^{-1} (\mathbf{I} - \exp(\mathcal{F} t))\mathbf{u}^{(k)}_0.
\end{align*}
We are interested in the limit behavior as $t \to 0^+$. To analyze it, take the limit of the determinant of the inverse covariance matrix in the middle. For the purposes of this determinant, use $\mathbf{F}$ instead of $\mathcal{F} = \mathbf{F} \otimes \mathbf{I}_h$. The proof works out identically since for any matrix $\mathbf{A} \in \mathbb{R}^{n \times n}$, $\exp(\mathbf{A} \otimes \mathbf{I}_h) = \exp(\mathbf{A}) \otimes \mathbf{I}_h$, and $\det(\mathbf{A} \otimes \mathbf{I}_h) = \det(\mathbf{A})^h$; this does not change the determinant's behavior as we mainly care whether it goes to zero, remains finite, or diverges to infinity.
\begin{align*}
    &\lim_{t \to 0^+} \det \left((\mathbf{I} - \exp(\mathbf{F}t)^T)\left(\mathbf{I} - \exp(\mathbf{F}t)\exp(\mathbf{F}t)^T\right)^{-1} (\mathbf{I} - \exp(\mathbf{F} t)) \right) \\
    &= \lim_{t \to 0^+}\frac{\det(\mathbf{I} - \exp(\mathbf{F}t))^2}{\det(\mathbf{I} - \exp(\mathbf{F}t)\exp(\mathbf{F}t)^T)}.
\end{align*}

Firstly, a property of determinants is that it is the product of the eigenvalues of the argument. The eigenvalues of $\mathbf{I} - \exp(\mathbf{F}t)$ are all $1 - \exp(s_*t)$, therefore $\det(\mathbf{I} - \exp(\mathbf{F}t)) = (1-\exp(s_*t))^n$. Furthermore, as $t \to 0^+$:
\begin{align*}
    (1 - \exp(s_* t))^n &= (1 - (1 - s_* t + \mathcal{O}(t^2)))^n = (s_* t + \mathcal{O}(t^2))^n = (s_* t)^n + \mathcal{O}(t^{n+1}) \\
    \det(\mathbf{I} - \exp(\mathbf{F}t))^2 &= (s_* t)^{2n} + \mathcal{O}(t^{2n+1}) = (2n-3)^n t^{2n} + \mathcal{O}(t^{2n+1}).
\end{align*}

\subsubsection{Case $n=2$}
Specifically for $n=2$, by Equation \eqref{eq:expFt}, $\exp(\mathbf{F}t) = \exp(-t)\left(\mathbf{I} + (\mathbf{F} - s_* \mathbf{I})t \right)$, and $s_* = -1, \xi = 2$. Therefore
\begin{align*}
    \det(\mathbf{I} - \exp(\mathbf{F}t)) &= (1 - \exp(-t))^2 = (1 - (1 - t + \mathcal{O}(t^2)))^2 = (t + \mathcal{O}(t^2))^2 = t^2 + \mathcal{O}(t^3)\\
    \det(\mathbf{I} - \exp(\mathbf{F}t))^2 &= t^4 + \mathcal{O}(t^5).
\end{align*}
Now moving onto the denominator.
\begin{align*}
    \exp(\mathbf{F}t) &= \exp(-t)(\mathbf{I}(1 +t) + \mathbf{F}t) \\
    &= \exp(-t)\begin{pmatrix}
        1+t & -t \\
        t & 1 - t
    \end{pmatrix}
\end{align*}
\begin{align*}
    \exp(\mathbf{F}t)\exp(\mathbf{F}t)^T &= \exp(-2t)\begin{pmatrix}
        1+t & -t \\
        t & 1 - t
    \end{pmatrix}\begin{pmatrix}
        1+t & t \\
        -t & 1 - t
    \end{pmatrix} \\
    &= \exp(-2t) \begin{pmatrix}
        2t^2+2t+1 & 2t^2 \\
        2t^2 & 2t^2-2t+1
    \end{pmatrix} \\
    \mathbf{I} - \exp(\mathbf{F}t)\exp(\mathbf{F}t)^T&= \begin{pmatrix}
        1 - \exp(-2t)(2t^2+2t+1) & -2t^2\exp(-2t) \\
        -2t^2\exp(-2t) & 1 - \exp(-2t)(2t^2-2t+1)
    \end{pmatrix} \\
    \det(\mathbf{I} - \exp(\mathbf{F}t)\exp(\mathbf{F}t)^T) &= (1 - \exp(-2t)(2t^2+2t+1))(1 - \exp(-2t)(2t^2-2t+1)) - 4t^4\exp(-4t).
\end{align*}

Take the rest in pieces. $4t^4\exp(-4t) = 4t^4 + \mathcal{O}(t^5)$, and
\begin{align*}
    1 - \exp(-2t)(2t^2+2t+1) &= 1 - (1 - 2t + 2t^2 - \frac{8t^3}{6} +\mathcal{O}(t^4))(2t^2+2t+1) \\
    &= 1 - (2t^2+2t+1) + 2t(2t^2+2t+1) - 2t^2(2t^2+2t+1) + \frac{8t^3}{6} +\mathcal{O}(t^4) \\
    &= \frac{4t^3}{3} +\mathcal{O}(t^4) \\
    \\
    1 - \exp(-2t)(2t^2-2t+1) &= 1 - (1 - 2t + 2t^2 + \mathcal{O}(t^3))(2t^2-2t+1) \\
    &= 1 - \left( 2t^2 - 2t + 1 + 4t^2 -2t + 2t^2 + \mathcal{O}(t^3)\right) \\
    &= -2t^2 + 2t - 4t^2 + 2t - 2t^2 + \mathcal{O}(t^3) \\
    &= 4t + \mathcal{O}(t^2).
\end{align*}

Finally, one may derive:
\begin{align*}
    \lim_{t \to 0^+}\frac{\det(\mathbf{I} - \exp(\mathbf{F}t))^2}{\det(\mathbf{I} - \exp(\mathbf{F}t)\exp(\mathbf{F}t)^T)} &= \lim_{t \to 0^+} \frac{t^4}{\frac{4t^3}{3}4t-4t^4} = \frac{3}{4}.
\end{align*}

The inverse covariance matrix is therefore positive definite as $t \to 0^+$, so $\lim_{t \to 0^+} D_{M}(\mathbf{u}^{(k)}_0, p^{\text{emp, HOLD}}_t) \gg 0$.
\end{proof}

\subsubsection{Case $n=3$}

\begin{align*}
    \exp(\mathbf{F}t) &= \exp(-t\sqrt{3})\begin{pmatrix}
        t^2 + t\sqrt{3} + 1 & t^2\sqrt{3} + t & t^2\sqrt{2}\\
        -t^2\sqrt{3}-t & -3t^2 + t\sqrt{3} + 1 & -t^2\sqrt{6} + 2t\sqrt{2} \\
        t^2\sqrt{2} & t^2\sqrt{6} - 2t\sqrt{2} & 2t^2 - 2t\sqrt{3}+1
    \end{pmatrix} 
\end{align*}
\begin{align*}
    (1 - \exp(-\sqrt{3}t))^3 &= (1 - (1 - \sqrt{3}t + \mathcal{O}(t^2)))^3 = (\sqrt{3}t + \mathcal{O}(t^2))^3 = 3^{3/2}t^3 + \mathcal{O}(t^4) \\
    \det(\mathbf{I} - \exp(\mathbf{F}t))^2 &= (1 - \exp(-\sqrt{3}t))^6 = (3^{3/2}t^3 + \mathcal{O}(t^4))^2 = 27t^6 + \mathcal{O}(t^7).
\end{align*}
The following was computed with python's sympy library:
\begin{align*}
    \det(\mathbf{I} - \exp(\mathbf{F}t)\exp(\mathbf{F}t)^T) &=\bigg((-36t^4 + 24\sqrt{3}t^3 - 36t^2 - 3)\exp(4\sqrt{3}t) \\ 
    &+ (36t^4 + 24\sqrt{3}t^3 + 36t^2 + 3)\exp(2\sqrt{3}t) + \exp(6\sqrt{3}t) - 1 \bigg)\exp(-6\sqrt{3}t) \\
    &= \frac{24\sqrt{3}}{5} t^9 + \mathcal{O}(t^{10})
\end{align*}
\begin{align*}
    \lim_{t \to 0^+}\frac{\det(\mathbf{I} - \exp(\mathbf{F}t))^2}{\det(\mathbf{I} - \exp(\mathbf{F}t)\exp(\mathbf{F}t)^T)} = \lim_{t \to 0^+}\frac{27t^6}{\frac{24\sqrt{3}}{5}t^9} \to \infty.
\end{align*}

The determinant of the inverse covariance matrix grows infinitely large as $t \to 0^+$, so $\lim_{t \to 0^+} D_{M}(\mathbf{u}^{(k)}_0, p^{\text{emp, HOLD}}_t) \to \infty$.

\subsection{Derivation of Ornstein–Uhlenbeck Convolution}
\label{pf:OUconv}
The Ornstein–Uhlenbeck convolution formula is derived from the deterministic ODE as follows
\begin{align*}
    d\mathbf{x}_t &= \left(-\xi \mathbf{x}_t - \frac{2\xi L^{-1}}{2} \mathbf{s}_\theta(\mathbf{x}_t, t)\right) dt \\
    \frac{d\mathbf{x}_t}{dt} + \xi \mathbf{x}_t &= -\xi L^{-1}\mathbf{s}_\theta(\mathbf{x}_t, t) \\
    \frac{d}{dt}\left( \exp(\xi t)\mathbf{x}_t\right) &= -\xi L^{-1}\exp(\xi t)\mathbf{s}_\theta(\mathbf{x}_t, t) \\
    \mathbf{x}_t &= \exp(-\xi t) \mathbf{x}_0 - \xi L^{-1} \int_0^\tau \exp(-\xi (t - \tau))\mathbf{s}_\theta(\mathbf{x}_\tau, \tau) d\tau.
\end{align*}

\section{Mini-Experiments}

\subsection{Role of Auxiliary Variable Initialization on Memorization}
\label{sec:roleinitaux}

Figure \ref{fig:initaux} compares memorization and image quality on the modified CelebA dataset for the case that initial auxiliary variables are drawn once and assigned to each data sample, and for the case that initial auxiliary variables are drawn for each run. The latter method models the joint diffusion process more closely as each initial auxiliary variable is drawn from the true distribution. Assigning each data point an auxiliary variable could also promote further memorization; analyzing this possibility is the purpose of this experiment. However, Figure \ref{fig:initaux} suggests that whether auxiliary variables are initialized once or constantly during training hardly makes a difference. The difference is slightly larger for model order $n=3$, but still not significant.

\begin{figure}
    \centering
    \begin{subfigure}[b]{0.48\linewidth}
        \centering
        \includegraphics[width=\linewidth]{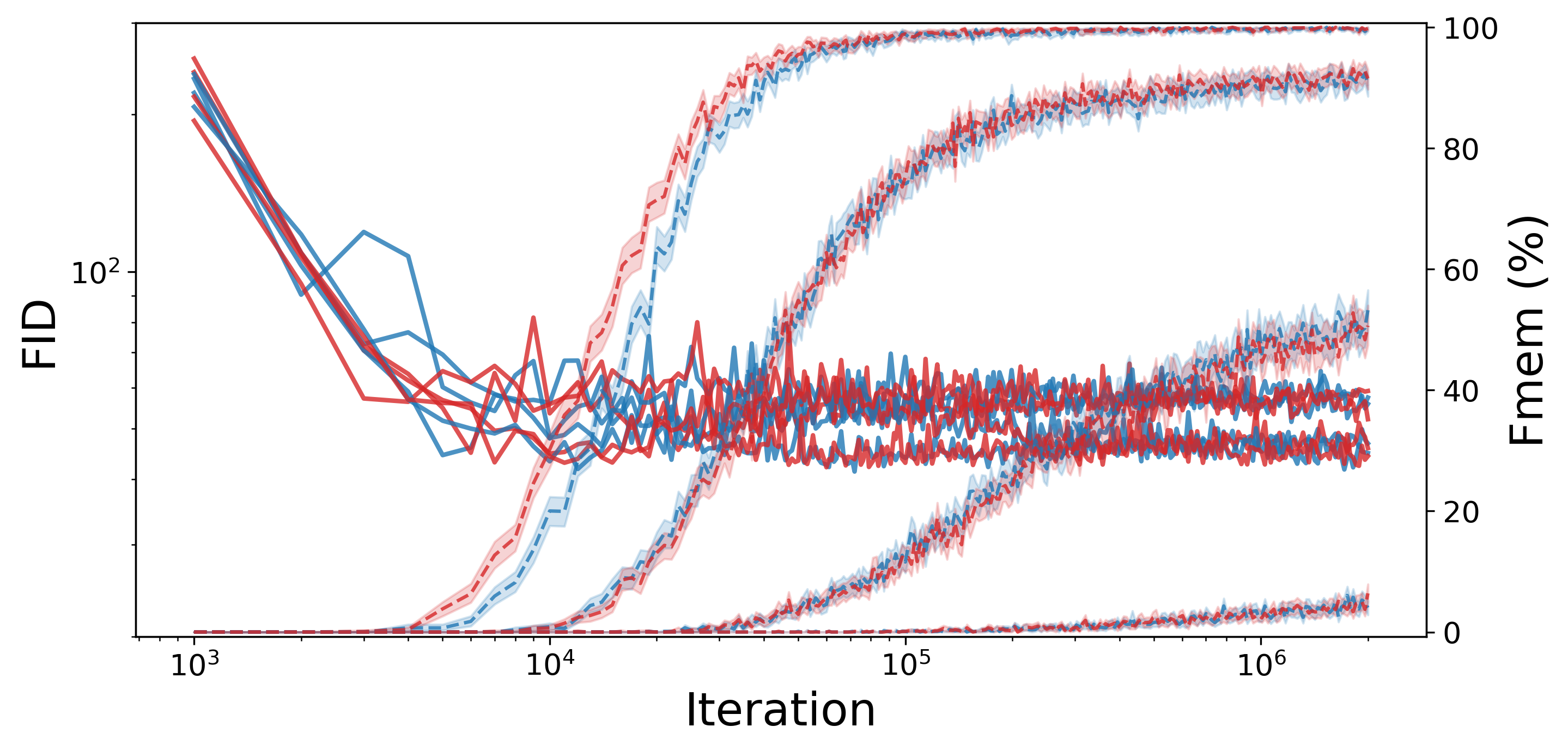}
        \caption{Initial auxiliary variable comparison for $n=2$.}
        \label{fig:initaux2}
    \end{subfigure}
    \hfill
    \begin{subfigure}[b]{0.48\linewidth}
        \centering
        \includegraphics[width=\linewidth]{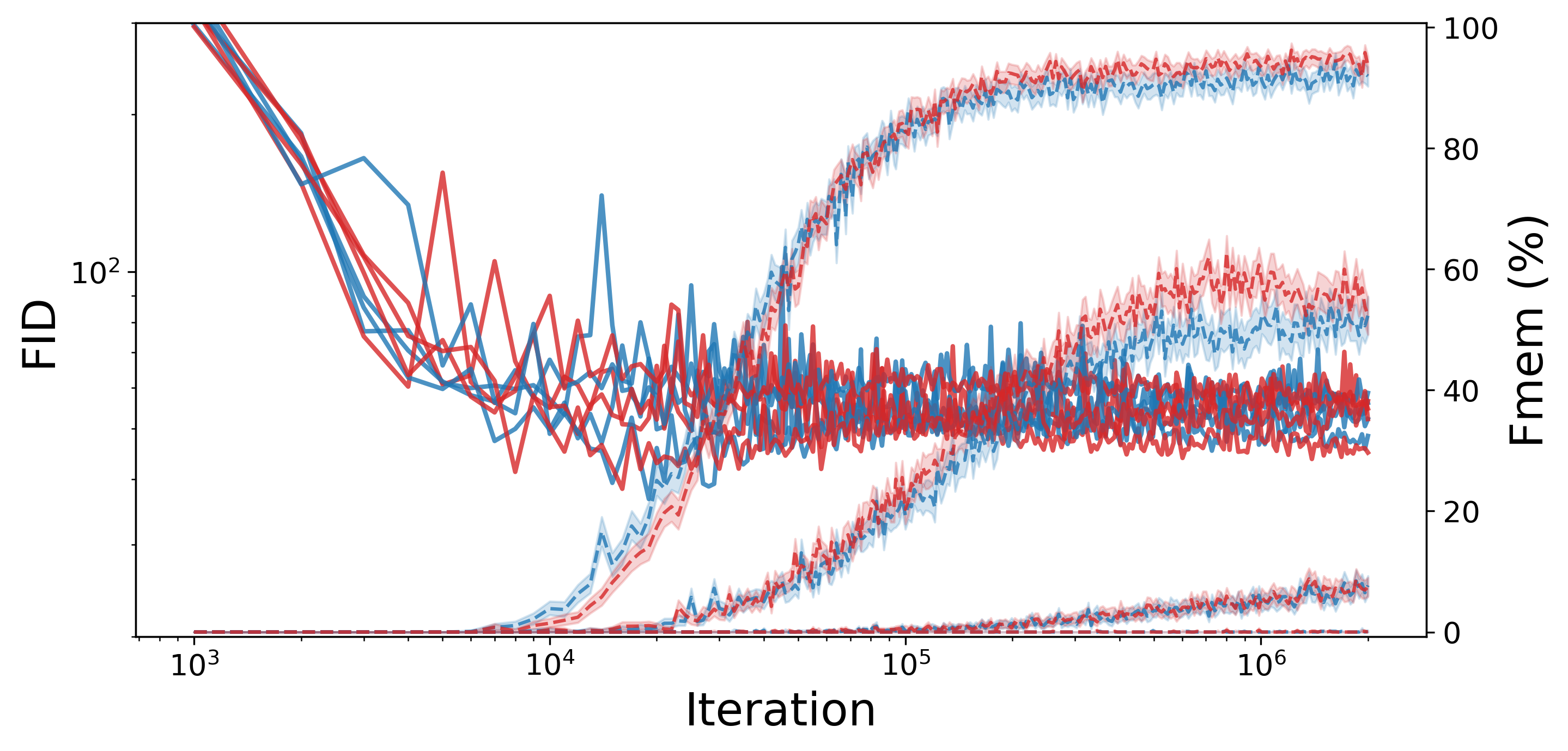}
        \caption{Initial auxiliary variable comparison for $n=3$.}
        \label{fig:initaux3}
    \end{subfigure}
    \caption{Initial auxiliary variable comparisons for $n=2, 3$. Memorization is shown with 95\% confidence intervals. The red curves represent initial auxiliary variables being assigned to each data sample, whereas the blue curves represent initial auxiliary variables being newly drawn for each training iteration. There are four pairs of memorization and FID that correspond to $n_{\text{train}} = 256, 512, 1024, 2048$.}
    \label{fig:initaux}
\end{figure}



\subsection{CIFAR-10 Nearest Neighbors}

Figure \ref{fig:cifar10memorization-three-models}, similarly to Figure \ref{fig:celebamemorization-three-models}, demonstrates that higher order dynamics are less prone to memorization by plotting generated images on the top rows with each corresponding nearest neighbor on the bottom rows. The non-memorized samples are visually lower quality than those of CelebA because of the CIFAR-10 category issue, and the fact these images were taken after only $1{,}000{,}000$ training iterations; these experiments were shortened due to time constraints.

\begin{figure}[t]
    \centering
    \begin{subfigure}[b]{\textwidth}
        \centering
        \includegraphics[width=0.6\textwidth]{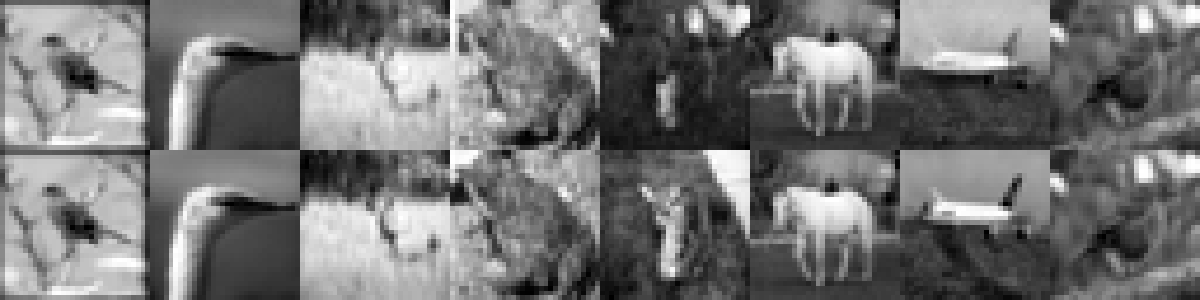}
        \caption{VPSDE. $\text{Fmem: }64.856\%, \text{FID: }138.768$}
        \label{fig:memorization-hold_cifar}
    \end{subfigure}

    \begin{subfigure}[b]{\textwidth}
        \centering
        \includegraphics[width=0.6\textwidth]{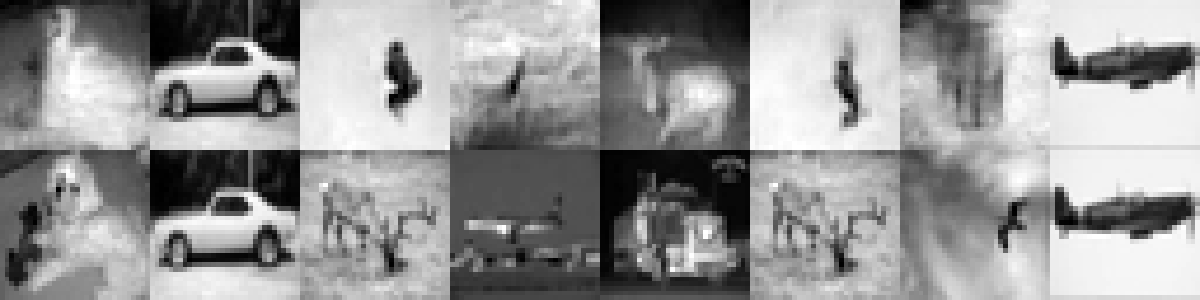}
        \caption{HOLD $n=2$. $\text{Fmem: } 14.410\%,\text{FID: }131.407$}
        \label{fig:memorization-vpsde_cifar}
    \end{subfigure}

    \begin{subfigure}[b]{\textwidth}
        \centering
        \includegraphics[width=0.6\textwidth]{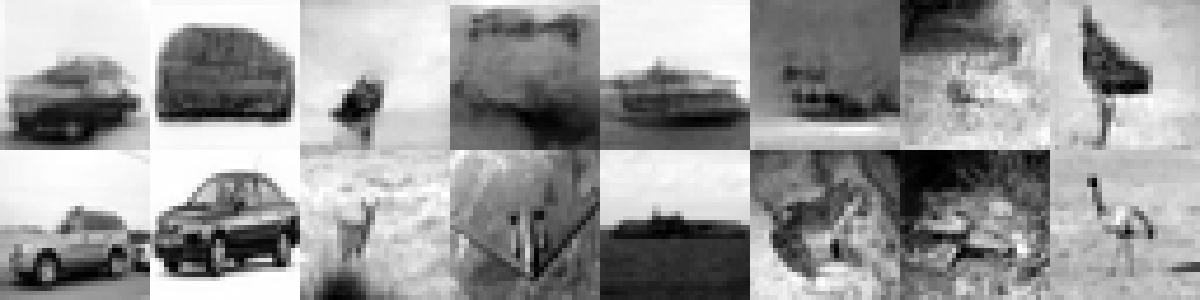}
        \caption{HOLD $n=3$. $\text{Fmem: } 0.598\%,\text{FID: }147.492$}
        \label{fig:memorization-third_cifar}
    \end{subfigure}

    \caption{Nearest training neighbors for different models at $1{,}000{,}000$ training iterations with $1024$ training images on the CIFAR-10 dataset. Each first row contains the generated images, and each second row contains the corresponding nearest neighbors.}
    \label{fig:cifar10memorization-three-models}
\end{figure}

\subsection{Training Losses}
This section contains Figure \ref{fig:traininglosses}, that presents the training losses for $n_{\text{train}} = 256$ for the VPSDE and HOLD $n=2,3$. The slightly higher training losses of HOLD $n=2$ and $n=3$ respectively may be explained by the difficulty of learning the optimal empirical score and statistical regularization that is proposed in this work.

\begin{figure}
    \centering
    \includegraphics[width=0.6\linewidth]{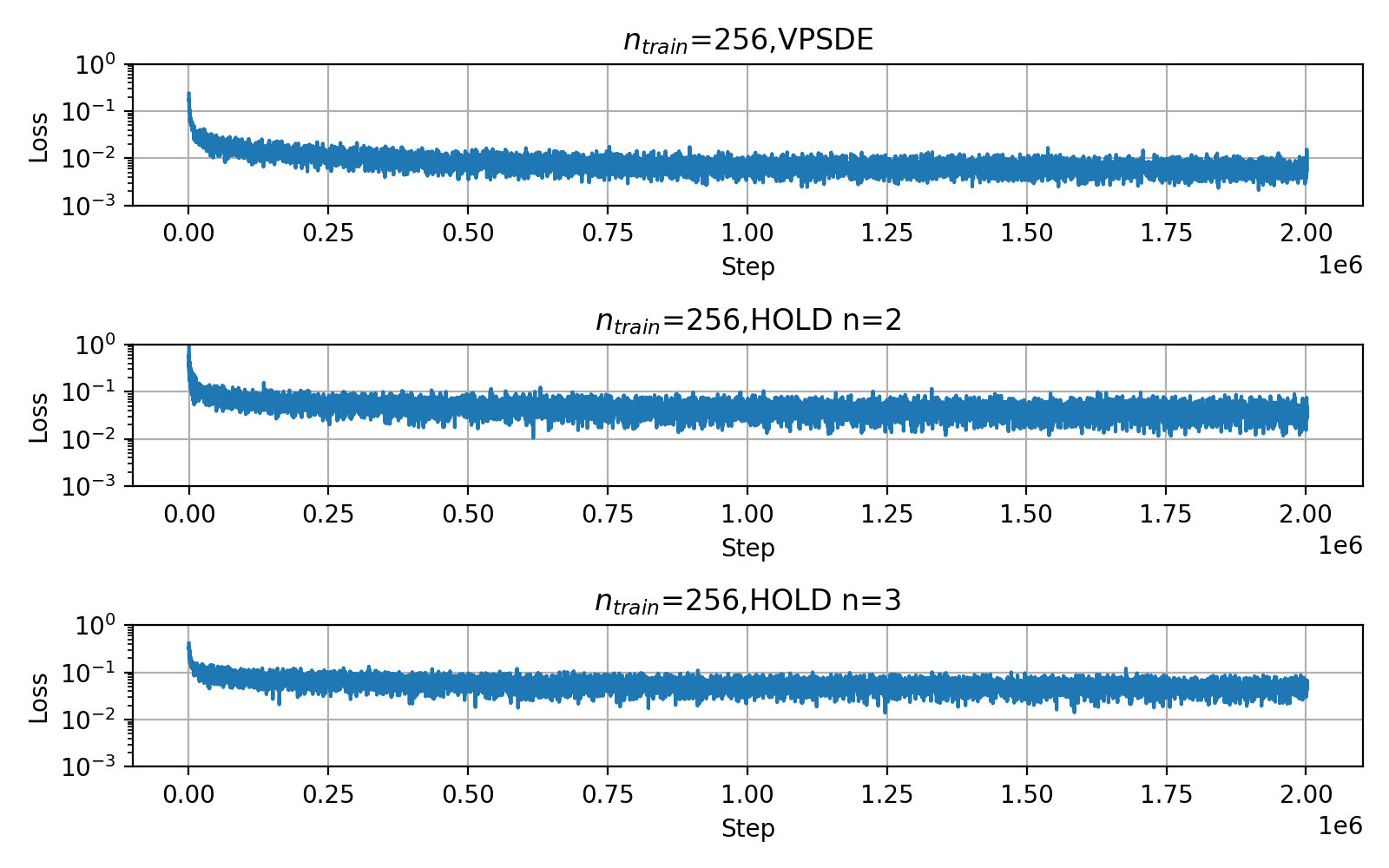}
    \caption{Training losses for $n_{\text{train}} = 256$ for the VPSDE and HOLD $n=2,3$ SDEs.}
    \label{fig:traininglosses}
\end{figure}

\section{Additional CelebA images with FIDs and Fmem}
\label{app:visual_results}

This appendix section presents more generated samples from the CelebA experiment for different number of training images $n_{\text{train}}$.

\begin{figure}[ht]
    \centering

    \begin{subfigure}{0.45\textwidth}
        \centering
        \includegraphics[width=\textwidth]{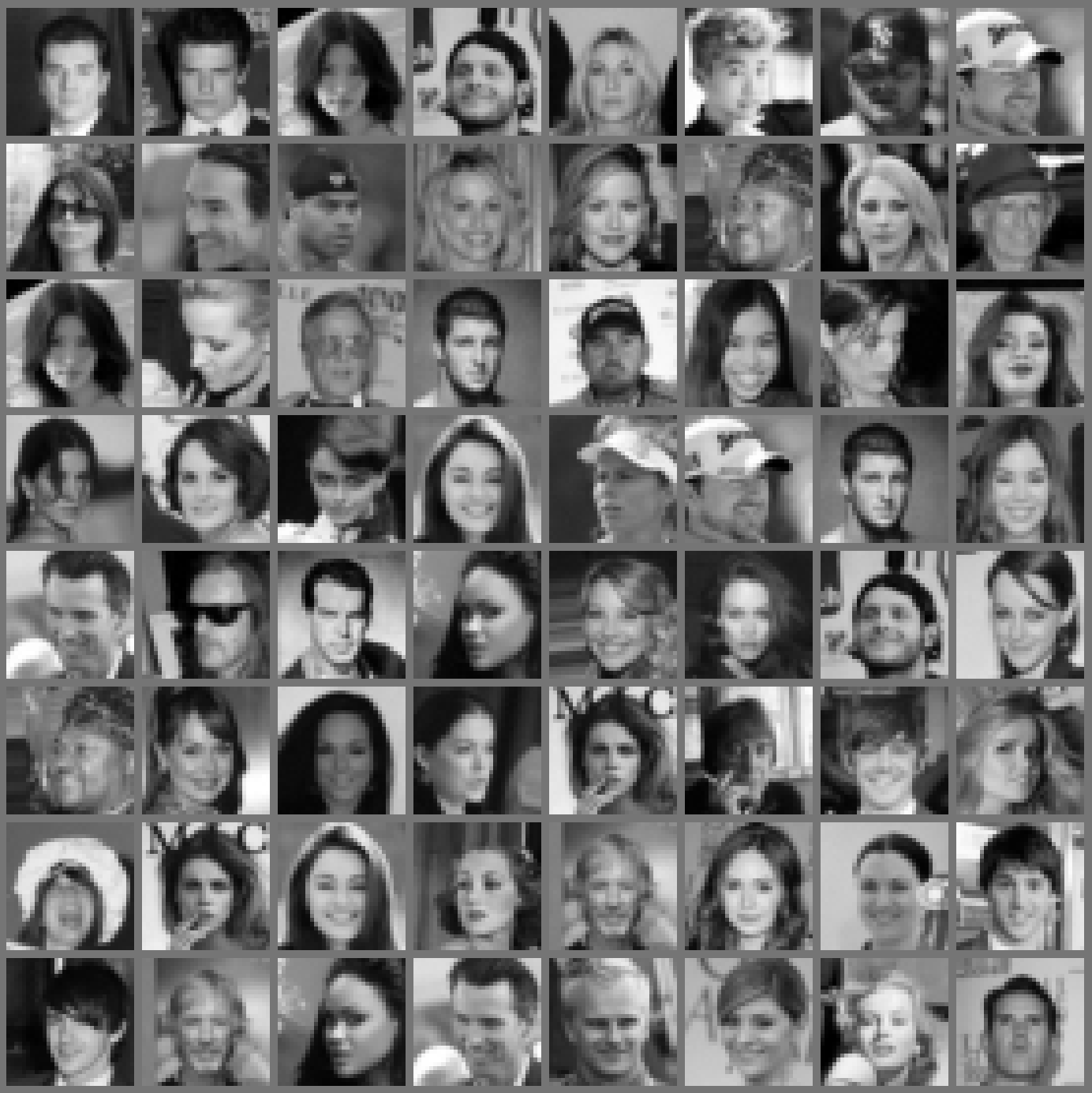}
        \caption{VPSDE. $\text{Fmem: }99.807\%, \text{FID: }58.405$}
    \end{subfigure}
    \hfill
    \begin{subfigure}{0.45\textwidth}
        \centering
        \includegraphics[width=\textwidth]{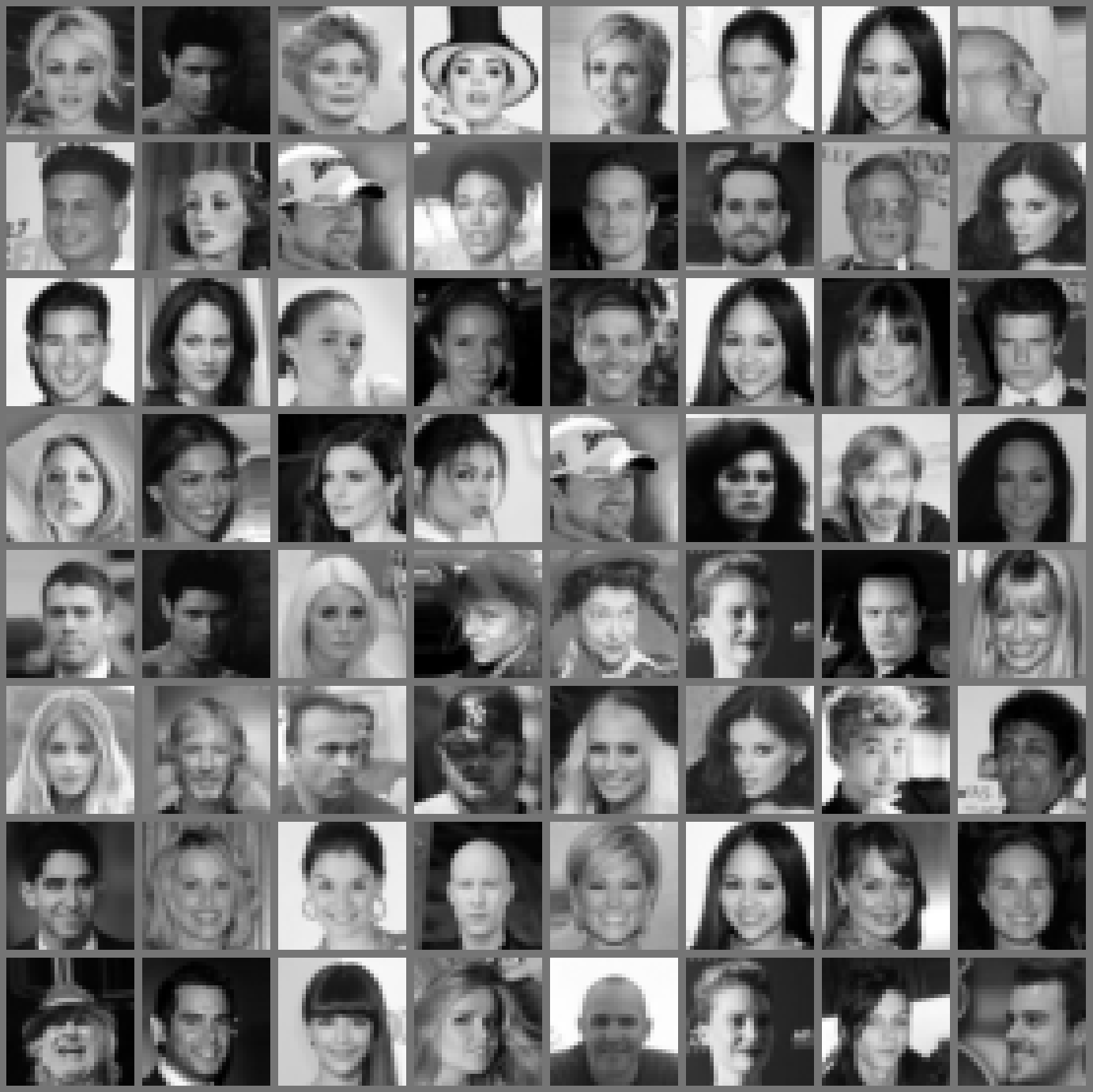}
        \caption{HOLD $n=2$. $\text{Fmem: } 99.522\%,\text{FID: }58.421$}
    \end{subfigure}

    \vspace{0.5em}

    \begin{subfigure}{0.45\textwidth}
        \centering
        \includegraphics[width=\textwidth]{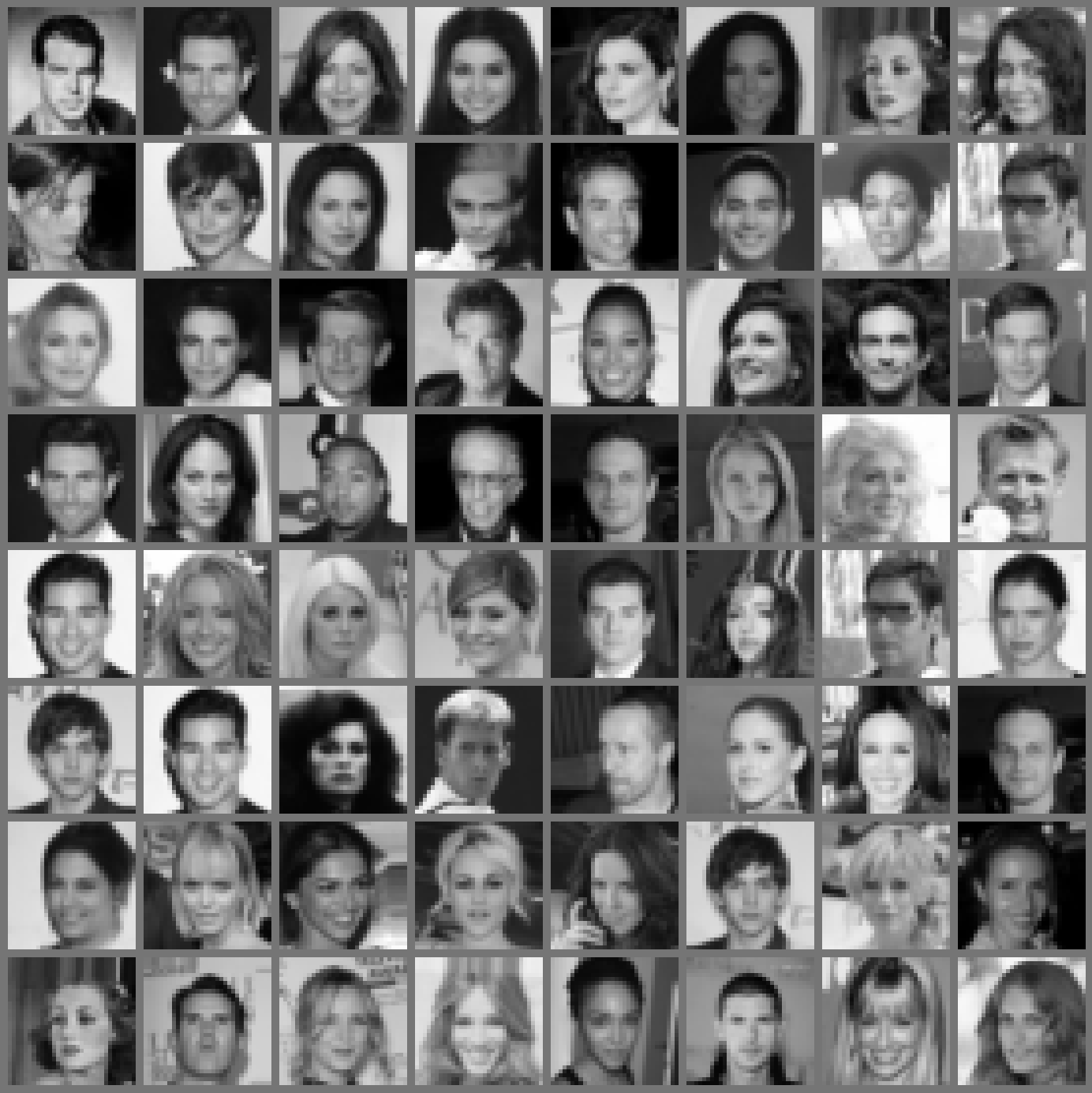}
        \caption{HOLD $n=3$. $\text{Fmem: }90.148\%, \text{FID: }53.066$}
    \end{subfigure}
    \hfill
    \begin{subfigure}{0.45\textwidth}
        \centering
        \includegraphics[width=\textwidth]{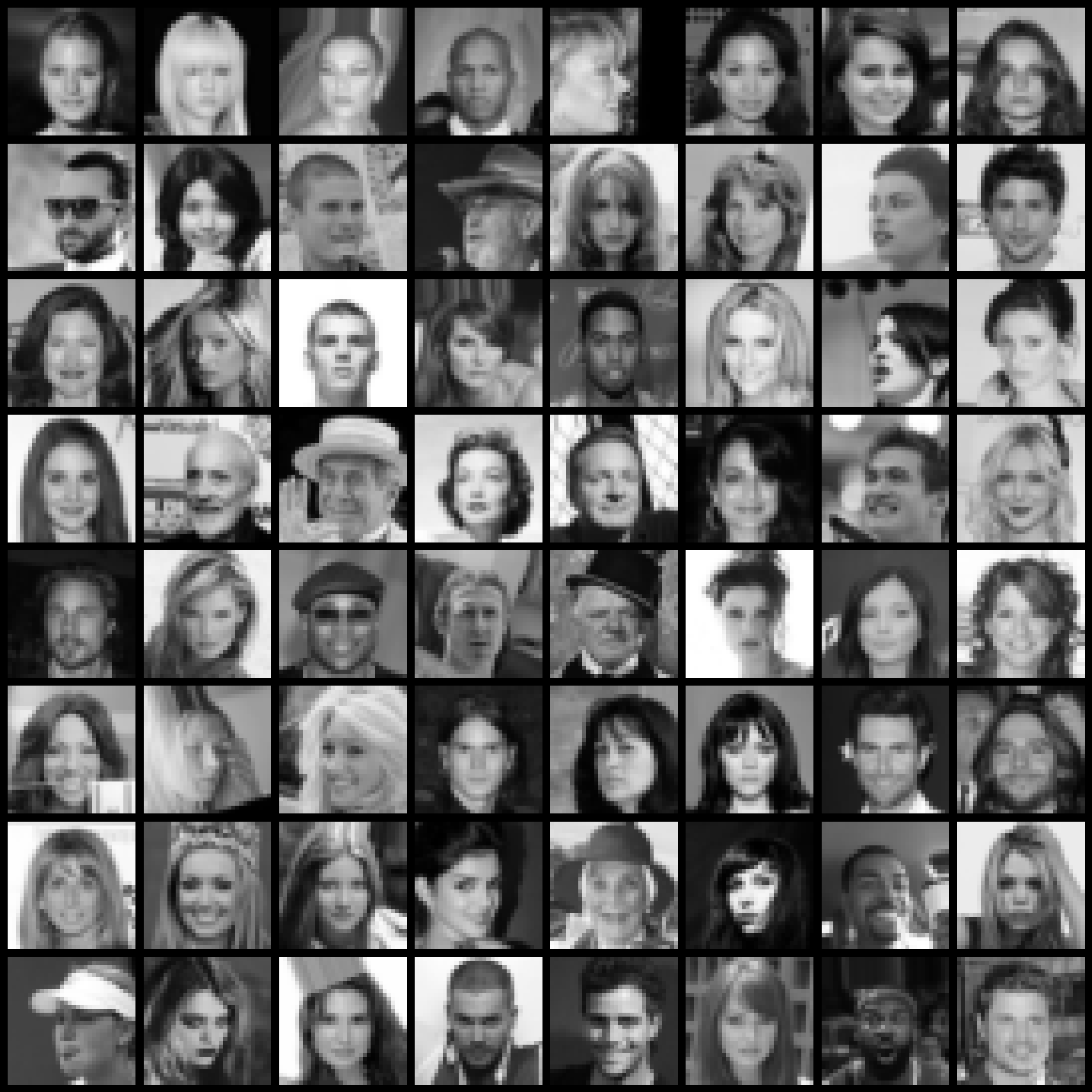}
        \caption{Samples from Training dataset}
    \end{subfigure}

    \caption{Celeba comparison for $n_{\text{train}} = 256$ training samples at $1{,}000{,}000$ training iterations.}
    \label{fig:celebasamples256}
\end{figure}

\begin{figure}[ht]
    \centering
    \begin{subfigure}{0.45\textwidth}
        \centering
        \includegraphics[width=\textwidth]{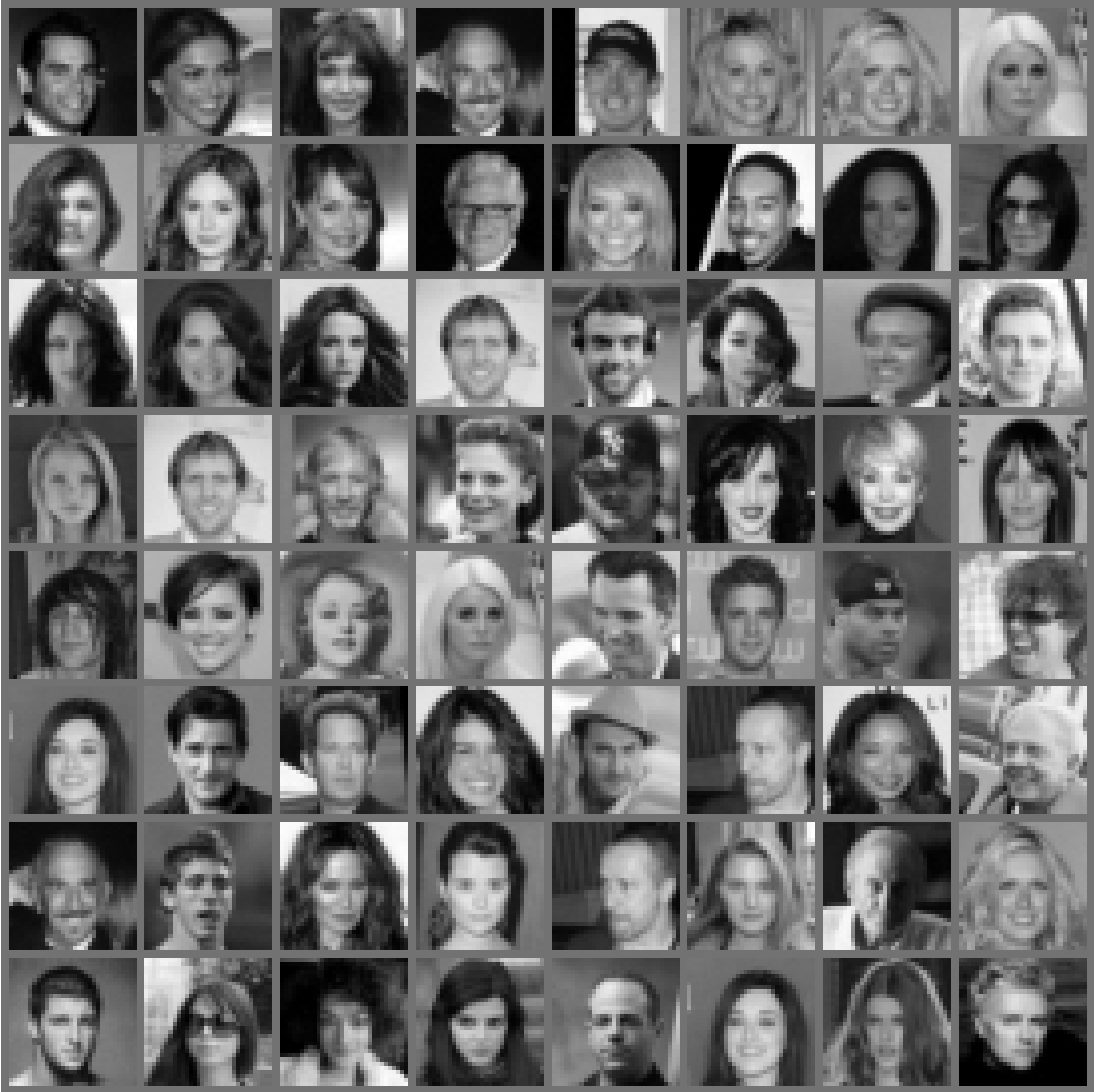}
        \caption{VPSDE. $\text{Fmem: }98.724\%, \text{FID: }51.945$}
    \end{subfigure}
    \hfill
    \begin{subfigure}{0.45\textwidth}
        \centering
        \includegraphics[width=\textwidth]{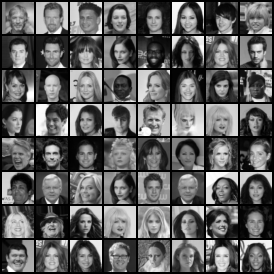}
        \caption{HOLD $n=2$. $\text{Fmem: } 89.363\%,\text{FID: }46.601$}
    \end{subfigure}

    \vspace{0.5em}

    \begin{subfigure}{0.45\textwidth}
        \centering
        \includegraphics[width=\textwidth]{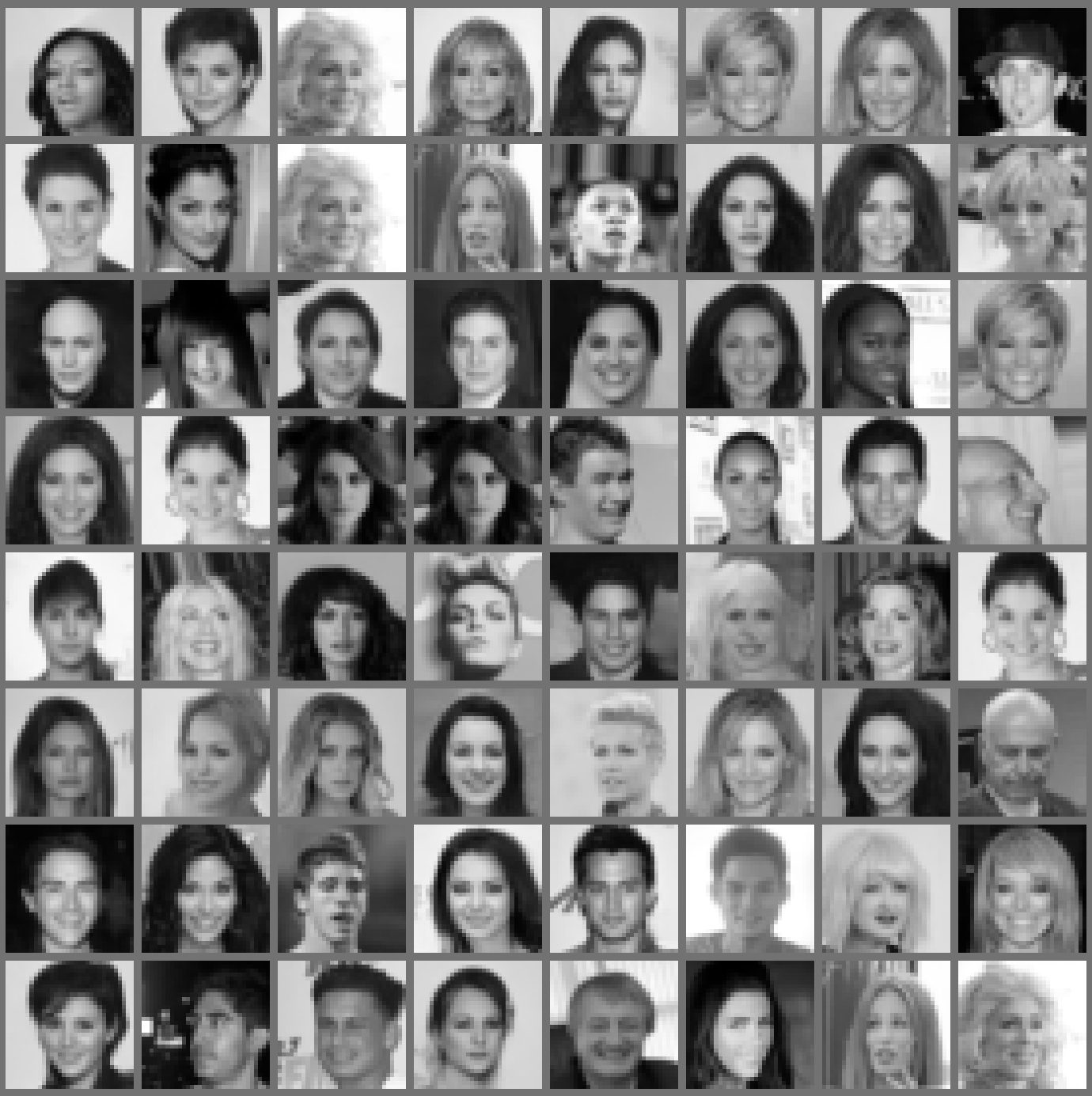}
        \caption{HOLD $n=3$. $\text{Fmem: }50.313\%, \text{FID: }47.817$}
    \end{subfigure}
    \hfill
    \begin{subfigure}{0.45\textwidth}
        \centering
        \includegraphics[width=\textwidth]{celebapics/celebaTraining.png}
        \caption{Samples from Training dataset}
    \end{subfigure}

    \caption{Celeba comparison for $n_{\text{train}} = 512$ training samples at $1{,}000{,}000$ training iterations.}
    \label{fig:celebasamples512}
\end{figure}

\begin{figure}[ht]
    \centering
    \begin{subfigure}{0.45\textwidth}
        \centering
        \includegraphics[width=\textwidth]{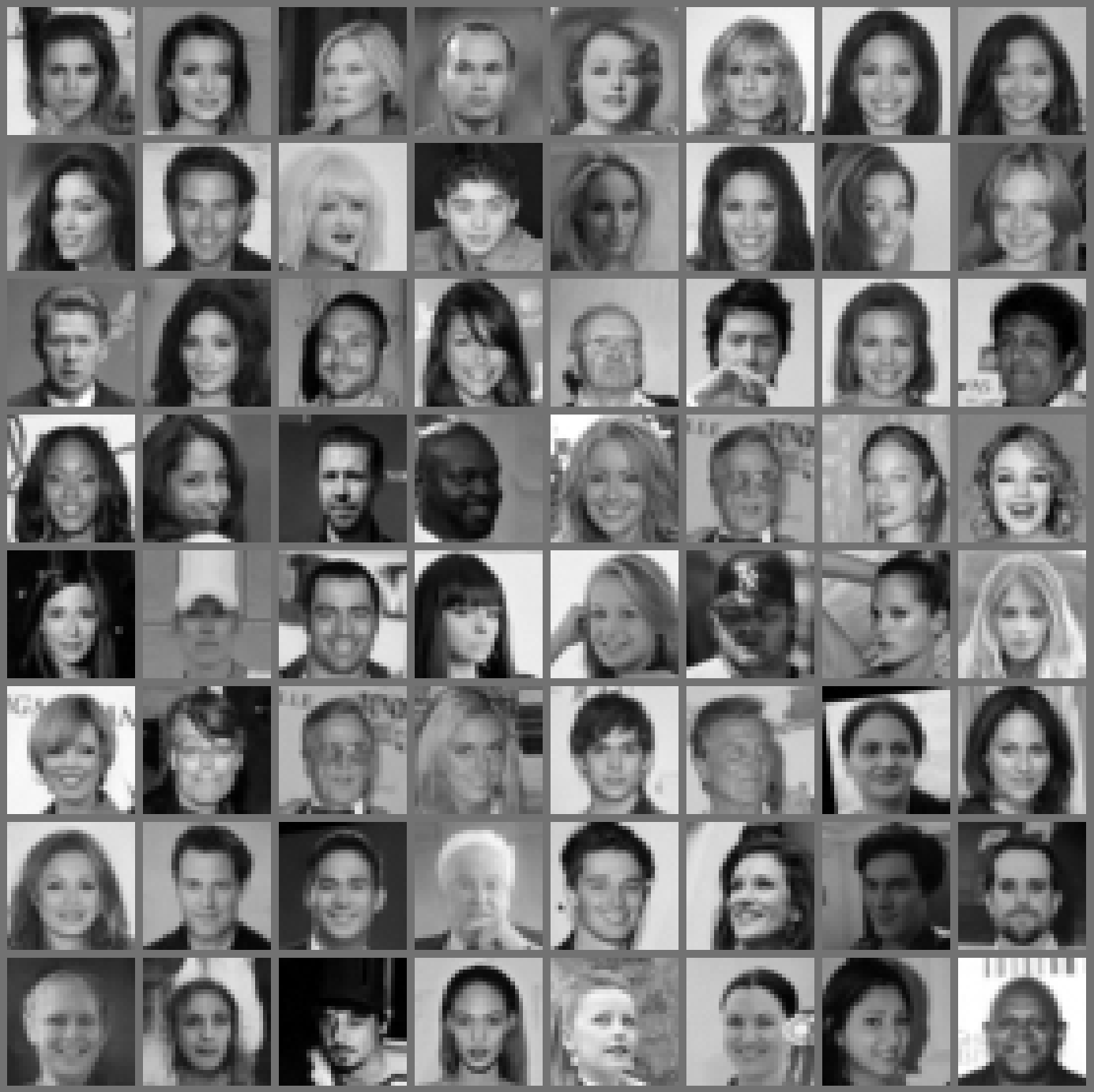}
        \caption{VPSDE. $\text{Fmem: }77.915\%, \text{FID: }47.012$}
    \end{subfigure}
    \hfill
    \begin{subfigure}{0.45\textwidth}
        \centering
        \includegraphics[width=\textwidth]{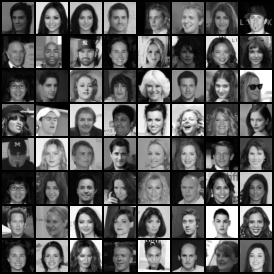}
        \caption{HOLD $n=2$. $\text{Fmem: } 47.024\%,\text{FID: }42.328$}
    \end{subfigure}

    \vspace{0.5em}

    \begin{subfigure}{0.45\textwidth}
        \centering
        \includegraphics[width=\textwidth]{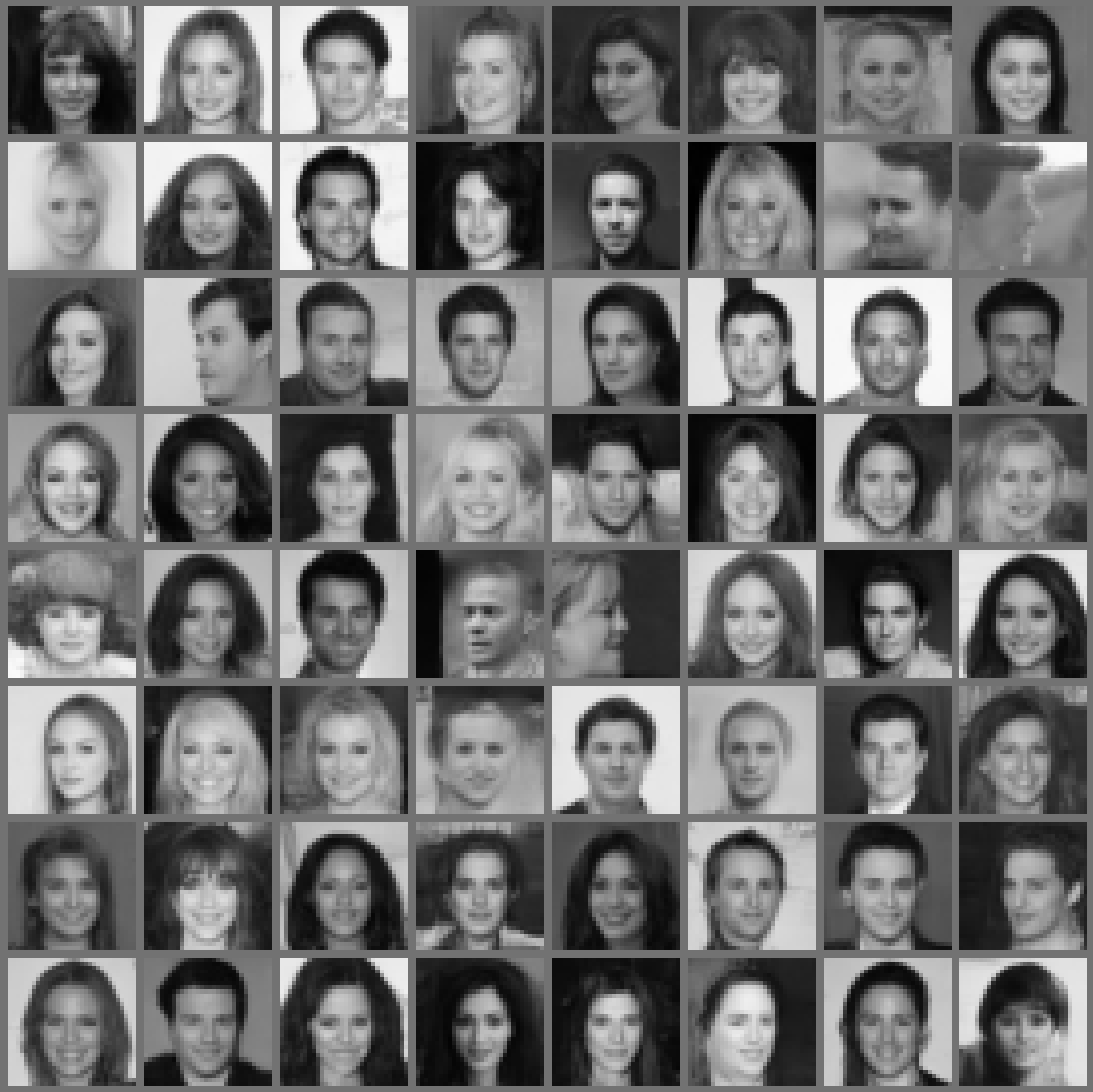}
        \caption{HOLD $n=3$. $\text{Fmem: }5.962\%, \text{FID: }60.380$}
    \end{subfigure}
    \hfill
    \begin{subfigure}{0.45\textwidth}
        \centering
        \includegraphics[width=\textwidth]{celebapics/celebaTraining.png}
        \caption{Samples from Training dataset}
    \end{subfigure}

    \caption{Celeba comparison for $n_{\text{train}} = 1024$ training samples at $1{,}000{,}000$ training iterations.}
    \label{fig:celebasamples1024}
\end{figure}

\begin{figure}[ht]
    \centering

    \begin{subfigure}{0.45\textwidth}
        \centering
        \includegraphics[width=\textwidth]{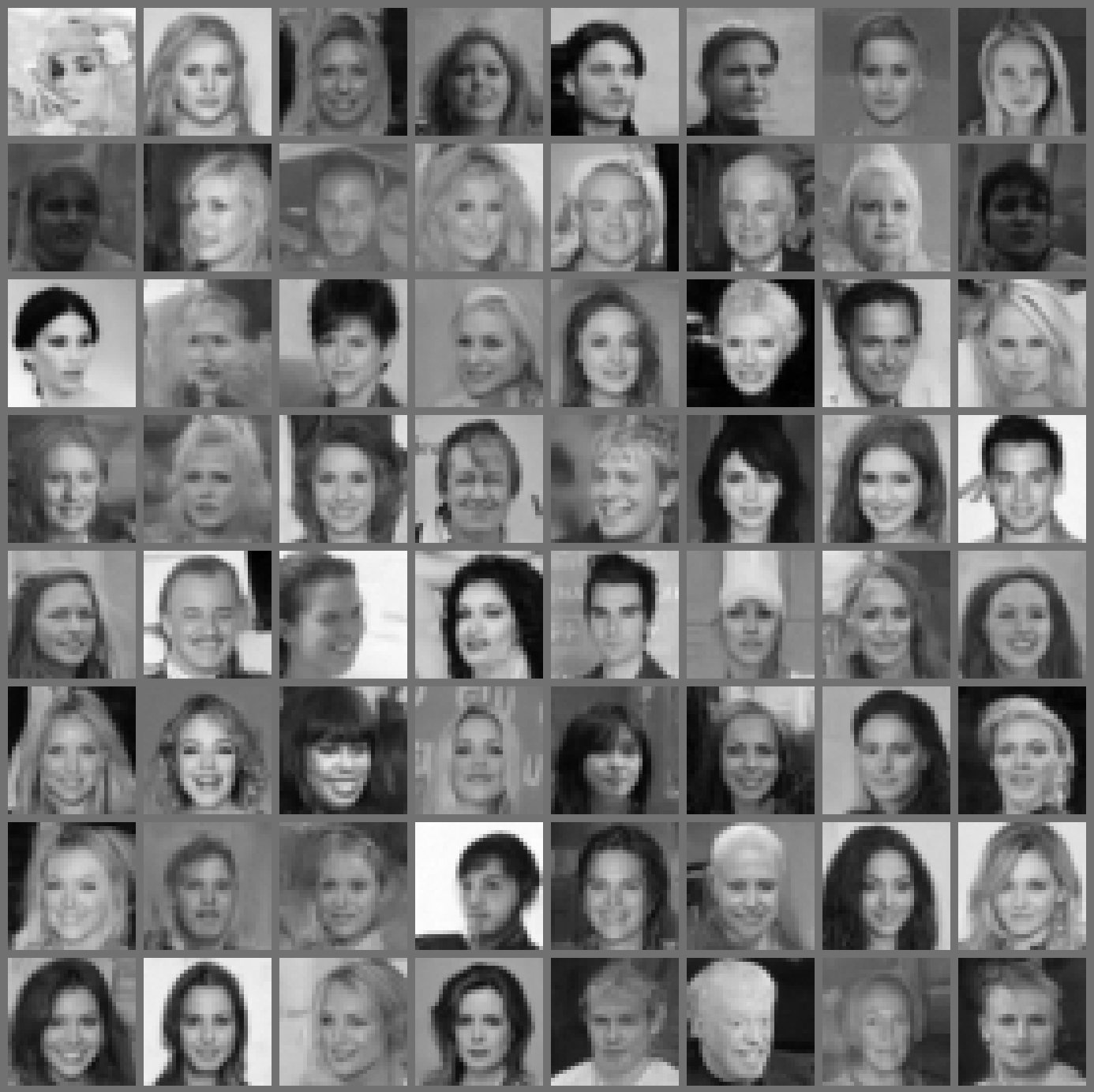}
        \caption{VPSDE. $\text{Fmem: }18.579\%, \text{FID: }49.168$}
    \end{subfigure}
    \hfill
    \begin{subfigure}{0.45\textwidth}
        \centering
        \includegraphics[width=\textwidth]{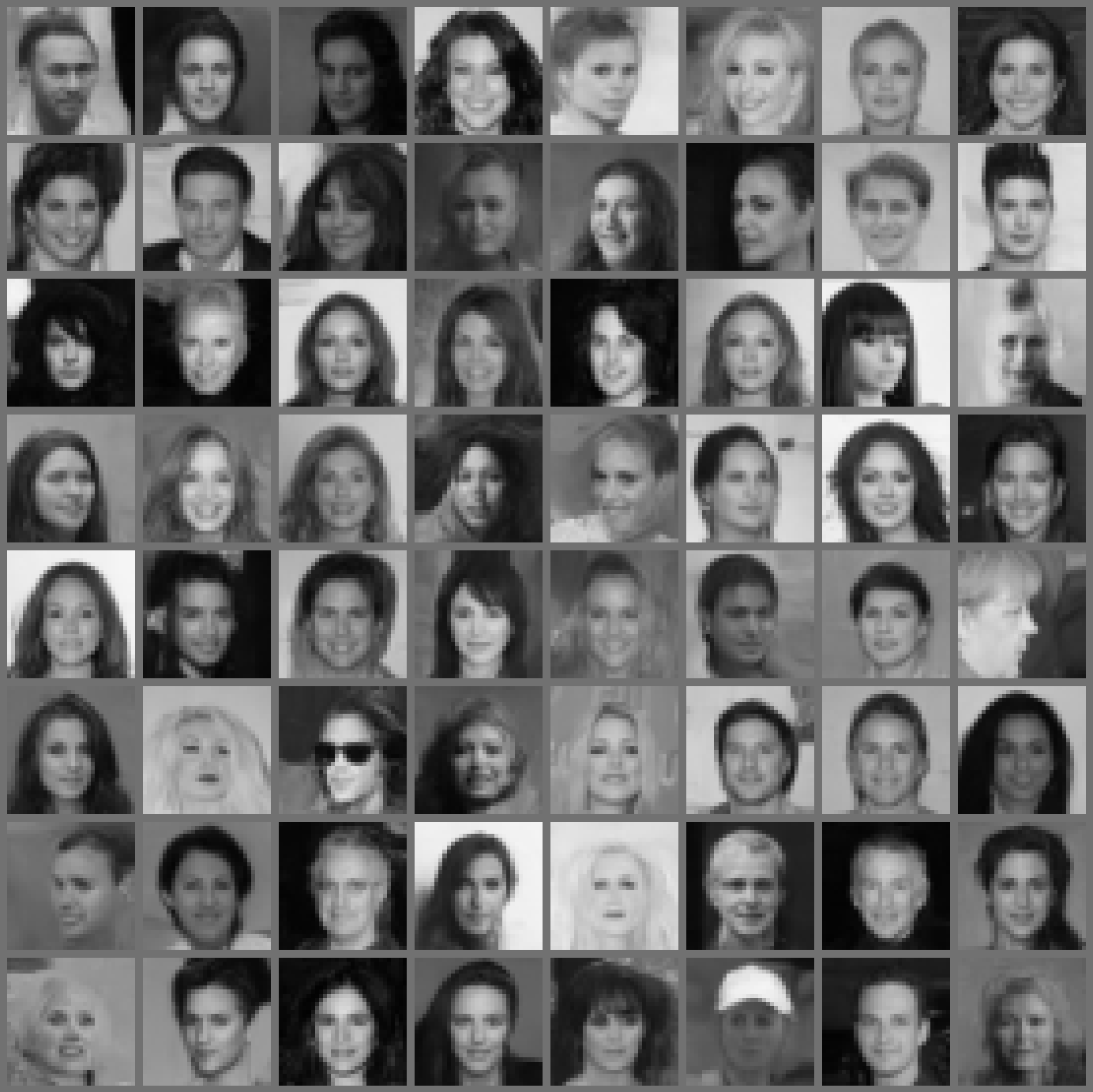}
        \caption{HOLD $n=2$. $\text{Fmem: } 2.845\%,\text{FID: }54.236$}
    \end{subfigure}

    \vspace{0.5em}

    \begin{subfigure}{0.45\textwidth}
        \centering
        \includegraphics[width=\textwidth]{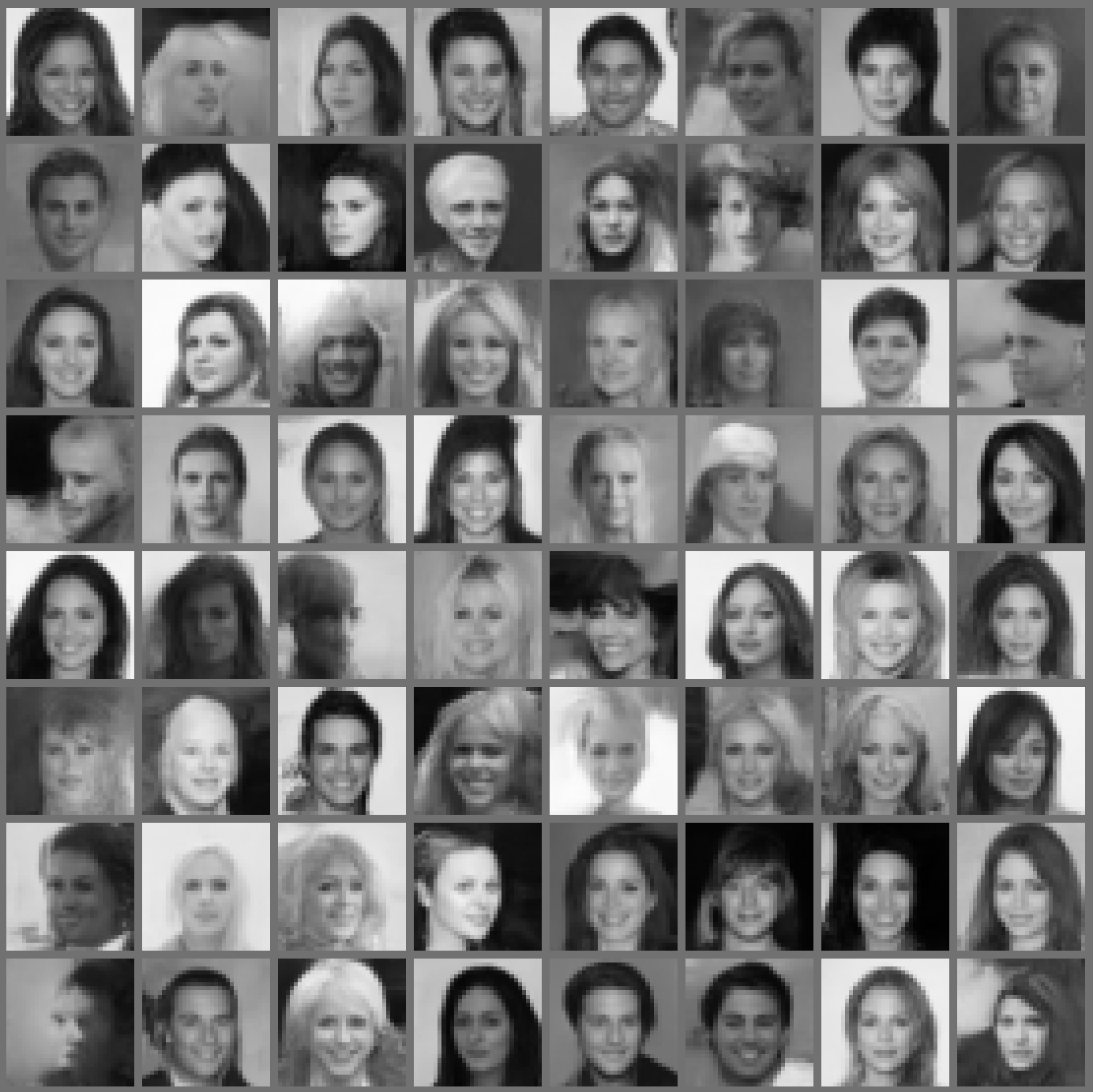}
        \caption{HOLD $n=3$. $\text{Fmem: }0.000\%, \text{FID: }53.560$}
    \end{subfigure}
    \hfill
    \begin{subfigure}{0.45\textwidth}
        \centering
        \includegraphics[width=\textwidth]{celebapics/celebaTraining.png}
        \caption{Samples from Training dataset}
    \end{subfigure}

    \caption{Celeba comparison for $n_{\text{train}} = 2048$ training samples at $1{,}000{,}000$ training iterations.}
    \label{fig:celebasamples2048}
\end{figure}

\section{Miscellaneous details}
\label{app:miscExpDetails}
For all experiments, a learning rate of $1 \times 10^{-4}$, the Adam optimizer, 32 base channels, a dropout rate of 0.1, and 1000 diffusion model steps were used. For the VPSDE, a linear noise schedule with $\beta_0=1\times10^{-3}, \beta_1=10.0$ was used, and the HOLD runs used $L^{-1} = 1.0$. The UNet (the same architecture as in \citep{bonnaire2025why}) uses channel multipliers (1, 2, 4) with self-attention applied at the middle two resolution levels. Experiments were conducted on a single NVIDIA H200 SXM5 GPU, with 141 GB of VRAM. For each experiment, training and generation combined took approximately 3 days of wall clock time, and separate FID and Fmem calculations took negligible amounts of time.


\end{document}